\newcommand{\rimp}{\Rightarrow} 
\newcommand{\dimp}{\Leftrightarrow} 
\newcommand{\says}{\strictif} 
\newcommand{\Rsays}{R_\strictif} 
\newcommand{\asays}{~\mathtt{says} ~} 
\newcommand{\signs}{~\mathit{sig}~} 
\newcommand{\Rsigns}{R_{\mathit{sig}}}
\newcommand{\entails}{\rightarrowtail} 
\newcommand{\Rentails}{R_\entails}
\newcommand{\Nat}{\mathbb{N}}
\newcommand{\terms}{{\cal T}}
\newcommand{\Agents}{\mathcal{A}}
\newcommand{\Prop}{{\mathcal P}\mathit{rop}}
\newcommand{\formulas}{{\cal F}}
\newcommand{\powerset}[1]{{\cal P}(#1)}
\newcommand{\this}{\tau} 
\newcommand{\Var}{\mathcal{V}\mathit{ar}}
\newcommand{\edepth}{D}
\newcommand{\commentout}[1]{}
\newtheorem{proposition}{Proposition}
\theoremstyle{definition}
\newtheorem{example}{Example}
\newenvironment{oldtheorem}[1]
  {\begin{renewcommand}{\theproposition}{\ref{#1}}}
  {\end{renewcommand}\addtocounter{proposition}{-1}}
\newcommand{\Opterms}{\mathcal{O}}
\newcommand{\height}{\mathit{height}}
\newcommand{\depth}{\mathit{depth}}
\newcommand{\create}{\mathtt{create}}
\newcommand{\call}{\mathtt{call}}
\title{A Formal Treatment of Contract Signature} 
\author{Ron van der Meyden\thanks{ R. van der Meyden is with the School of Computer Science and Engineering, UNSW Sydney. 
E-mail: R.VanderMeyden@unsw.edu.au. 
This paper has been accepted to IEEE Transactions on Services Computing. 
The final published version is DOI 10.1109/TSC.2021.3101833. 
\copyright 2021 IEEE. Personal use of this material is permitted. Permission from IEEE must be obtained for all other uses, in any current or future media, including reprinting/republishing this material for advertising or promotional purposes, creating new collective works, for resale or redistribution to servers or lists, or reuse of any copyrighted component of this work in other works.
}}
\begin{document}

\maketitle

\begin{abstract}  
The paper develops a logical understanding of processes for signature of legal contracts, motivated by
applications to legal recognition of smart contracts on blockchain platforms. 
A  number of axioms and rules of inference are developed that can be used to justify a ``meeting of the minds'' 
precondition for contract formation from the fact that certain content has been signed. 
In addition to an ``offer and acceptance'' process, the paper considers
``signature in counterparts'', a legal process that permits a contract between two or more parties to be brought into force by having the parties 
independently (possibly, remotely) sign different copies of the contract, rather  than placing their signatures on a common copy at a physical meeting.
It is argued that a satisfactory account of signature in counterparts benefits from a logic with syntactic self-reference.  The axioms used are supported by a formal semantics, and a number of 
further properties of the logic are investigated. In particular, it is shown that the logic implies that when a contract has been signed, the 
parties do not just agree, but are in \emph{mutual agreement} (a common-knowledge-like notion) about the terms of the contract. 
\end{abstract}

\section{Introduction} 

The idea that one should use formal logic to deal with aspects of legal reasoning has a history dating as far back as work of Leibniz  in the 17th Century \cite{APS13}.
Modern forms of logic and formal representation were applied to reasoning about insurance contracts in the 1930's \cite{Pfeiffer50}, advocated for 
legal drafting in the 1950's \cite{Allen57} and given computational support from the 1970's \cite{Sprowl}. 
Logical representation of contractual relationships between business partners has also been a focus of research 
since the early days of electronic commerce \cite{Kimbrough}.  

A more recent incarnation of formality in legal reasoning is smart contracts in the context of cryptocurrency and  blockchain platforms \cite{Szabo98,ethereum}. 
In their present form, smart contracts generally consist of programs and data that are used to enforce security properties of multi-agent protocols. 
While smart contracts may not be legally enforceable, often these protocols are for processes for which trust amongst the agents would typically have been provided through the use of legal contracts, and many of the applications under consideration, including financial derivatives, tokens representing corporate equity rights, insurance, and loans, are plainly within the scope of contract law. The term ``smart legal contract" is beginning to be applied to contracts that both use computational elements and have legal validity. 
The emergence of the area of smart contracts  therefore gives renewed motivation to study the formal representation of legal reasoning and legal processes.  

One of the novelties of smart contracts, compared to earlier work on formal contract representation, is their application of cryptographic techniques, and in particular, digital signatures. 
In the present paper, we consider questions of logical representation pertinent to the legal process of contract signature.
Our longer term goal for research in this area is to develop a level of abstraction, intermediate between natural language contracts and smart contract code, 
that enables the content of a contract to be  expressed in logical form.  Representations at this level of abstraction would help to bridge between the declarative form of legal 
contracts and the imperative form of smart contract code, and provide a formal specification against which the code can be verified. (Indeed, we believe that there are situations
where the logical representation and the code can be identified, yielding a type of logic programming approach to smart contracts.)

Depending on the legal jurisdiction, various criteria are applied in law in order to determine whether a contract has been validly formed between two 
or more parties. For example, Anglo-American law, or Common law, applies criteria including  ``meeting of minds'' (which may be witnessed by ``offer and acceptance''), 
``consideration" (exchange of value), ``intention to create legal relations", and ``capacity'' (being of requisite age and of sound mind, or holding a corporate position delegated to 
enter into contracts on a company's behalf). For some specific types of transactions, e.g., sale of land, signed documentation is mandatory, but 
in general, a signed agreement is not required for a contract to be established. However,  the formality of signed documentation is very frequently used to help establish the 
evidentiary basis for formation of a contract. Our concern in this paper is specifically with the signature process, rather than with the complete set of legal criteria for formation of a contract. 

In formation of legal contracts using signed agreements, all parties to the contract are required to sign 
in order for the contract to be considered valid.  Frequently, this is done at a physical meeting
of the parties so that copies of the contract can be signed and immediately exchanged for co-signature. 
There are several motivations for this process. For one, it enables the parties signing to be authenticated, and allows for 
witnessing of the signatures.   It is also frequently desirable to 
establish a state of common knowledge amongst the parties that the contract has been signed and that the signers were authenticated: 
a physical signing ceremony achieves this goal. Finally, it prevents one party gaining advantage 
by presentation of a partially signed contract to a third party (e.g., Bob induces Carol to offer a higher price on Bob's house
by showing her the sales contract that Alice has signed).  

However, physical meetings present the difficulties of scheduling of the participants and 
travel costs. In practice, therefore, the parties frequently allow the contract to be considered valid  
when each of the parties has signed a distinct copy. This is referred to as the document being 
signed ``in counterparts", and is considered legally valid in many jurisdictions. 
In some cases there is the additional requirement that the 
contract is not valid until the signed copies have been delivered to the parties. 

The main question we address in this paper is the following: when a party signs their copy of a contract,  
just what, logically, is the attitude that they are taking in doing so? They are generally not assenting that they are bound by the terms $\phi$ of the contract, since that 
depends on the other party or parties to the contract also signing. 
A better characterization would seem to be the conditional assertion ``I assent to be bound by the terms $\phi$ provided that the others do also". 

To formalize this intuition, we work in the setting of a modal logic in the spirit of logics of access control and authentication \cite{Abadi08}, which have been applied to formal reasoning about 
cryptographic protocols, digital certificate infrastructures and access control policies. In particular, we use a modality $A \says \phi$ to capture 
that an agent $A$ ``assents to''  a formula $\phi$.  In the computer security literature, the corresponding modality is usually glossed as capturing what agent $A$ ``says''. 
We prefer the readings ``assents to'' or "agrees that", since in our application of contract signature, the formula $\phi$ will express the terms of a contract, and  what an agent says may have implications for their legal commitments, so has a more formal connotation.  
Since our development requires a distinction between this modality and the syntactic form of content explicitly signed by an agent, the 
logic uses additional constructs $A \signs t$ (agent $A$ has signed syntactic content $t$) and $t\entails \phi$ (syntactic content $t$ entails, or includes in its meaning, formula $\phi$). 
The construct $t\entails \phi$ is used to bridge between syntactic content signed and what an agent assents to. 

We take it to be a key criterion for formation of a contract, expressed in the formula $\phi$, between two parties $A$ and $B$, that both parties agree to the terms $\phi$, 
which we can express in the logic as $(A \says \phi) \land (B \says \phi)$.  This could be understood as corresponding to the legal notion of a `meeting of the minds' of the parties. 
In highly formalized settings such as smart contracts on a blockchain under a ``Code is Law'' interpretation, this condition may be taken to be necessary and sufficient 
for formation of an enforceable contract. However, as noted above,  the law takes a significantly more nuanced view that applies additional, jurisdiction dependent, criteria. 
If $A$ is a child, for example, the law may hold that any promises apparently expressed using $\phi$ do not give rise to enforceable obligations. 
To allow for interpretations in which this condition is used merely as a necessary condition for contract formation, we therefore do not interpret the construct $A \says \phi$ as carrying
normative meaning. The intuitive reading of $A \says \phi$ is therefore weaker than  normative notions such as ``obligation'' \cite{DeonHDBK} and ``commitment'' \cite{BKYD15}
that have been the focus of 
work in deontic logic and multi-agent systems. In our intended application, such normative content could be expressed in the formula $\phi$ itself, though we do not attempt to 
develop the expressiveness required for this in the logic of the present paper. 

We show that the logic can explain a meeting of the minds in an offer and acceptance process for contract formation 
by having the offeror sign a message that states essentially the conditional
``I assent to be bound by the terms $\phi$ provided that you sign $\phi$", to which the 
acceptor responds by signing $\phi$. However, the content signed in this process is asymmetric.  
We show that a naive way to capture signature in counterparts, 
in which the parties sign symmetric conditional statements,  does not  suffice to establish a meeting of the minds. 
We argue that a better understanding can be obtained by treating the contract as making a self-referential 
statement: ``\emph{This contract} may be signed in counterparts", that one does indeed find in the natural language text of many actual contracts. 
The problem that then arises is how to make formal sense of such self-reference, given that attempts to introduce self-reference into logic are fraught with paradox \cite{sep-self-reference}. 
We solve this problem by developing a logical treatment that  allows self-reference without falling into inconsistency.

Key to our approach to handling self-reference without paradox  is the distinction between 
syntactic terms and their logical entailments. 
In addition to naturally fitting the underlying cryptography, this enables our semantics of self-reference to avoid 
complex constructions using three-valued logic, fixed points or nonstandard set theories. 

We give a number of axioms and a model theoretic semantics that validates these axioms. 
We then show that the axioms allow a formal account of the reasoning by which a contract signed in counterparts becomes valid. 
We go on to study some further properties of the logic. Our logic uses an axiom, similar to others in the literature, that states essentially that 
if an agent has signed a message $t$, then all agents assent to  the fact that it has signed this message. We show that 
it follows that our account of contract signature implies not just that the parties jointly assent to the terms of the contract, but that they 
\emph{mutually assent}, a stronger common-knowledge like property. Indeed, we show that not just the agents, but society itself mutually assents to the 
fact that the agents mutually assent.  This is a much stronger conclusion, that may be questionable in the 
context of asynchronous or  unreliable communication. However, we argue that the conclusion is justifiable under some interpretations of the logic that involve use of 
trusted third parties or a blockchain to register the signatures. 

The focus of these modelings of contract formation is on a declarative representation of contracts using formulas of a logic. 
In current practice, on platforms such as Ethereum, smart contracts are not represented declaratively, but as imperative code and data. 
We also show that the logic provides a number of ways to formally model the ascription of declarative meaning to smart contracts represented in such an imperative form: 
either by formation of a separate contract that describes how the smart contract is to be interpreted, 
or by a jurisdiction standardizing a declarative interpretation for blockchain messages. 
In our modeling for the latter case, the entailment relation $t\entails \phi$ is used to associate 
declarative consequences $\phi$ of the messages $t$ that are signed by participants in a smart contracts, 
such as the message that creates a smart contract on the blockchain, and a message that makes function call to a smart contract. 
In particular, we show how taking particular formulas to be entailed by such messages can explain a meeting 
of the minds of the creator of a smart contract and a participant in the smart contract.  

The structure of the paper is as follows. Section~\ref{sec:logic} introduces the syntax of a logic 
dealing with signed messages, their semantics, and the consequences for what an agent assents to.
 We give this logic a model theoretic semantics in Section~\ref{sec:sem}. In Section~\ref{sec:offer}, we show that the logic can be used
 to give an account of an offer and acceptance process for contract signature. Section~\ref{cpt1} turns to the topic of signature in counterparts. 
 An abstract account of how the individual signatures of the parties leads to their joint assent to the terms of the contract
is provided. However, this account relies upon an unexplained assumption about the meaning of the contract. To justify this assumption, 
we then turn in Section~\ref{sec:self-ref} to extending the syntax and semantics of the logic to include self-referential formulas.  We show that this can be 
achieved without falling into contradiction. Section~\ref{cpt2} returns to signature in counterparts, showing that the previously unexplained assumption 
can be justified by taking the contract to be self-referential.  Section~\ref{sec:common} deals with the issue of mutual assent to the contract. 
Our primary focus in the paper is towards a declarative view of contracts, but we discuss how the logic might be applied to associating a declarative interpretation to 
imperative smart contracts, as in the current practice, in Section~\ref{sec:imperativesc}. 
Section~\ref{sec:related} discusses related work. 
Finally, Section~\ref{sec:concl} concludes with a discussion of possible future research directions. 
An appendix gives proofs omitted from the body of the paper. 

\section{A Logic} \label{sec:logic}

The logic can be understood as describing a static situation, in which it has been 
determined which messages have been signed, and all these signed messages are available to all agents. 
We would like to determine what each agent will be understood to have formally agreed to in such a situation. 
The reader may find it helpful to think of the logic as describing a situation in which all the relevant information about what has been 
signed has been presented in court. (Other interpretations of the logic are discussed in Section~\ref{sec:common}.) 

The syntax of the logic is parameterized by a tuple $\Sigma = (\Agents, \Prop, \Opterms)$ where 
$\Agents$, $\Prop$ are disjoint sets and $\Opterms= (\Opterms^0, \Opterms^1, \ldots)$ 
is a sequence of sets $\Opterms^n$, also disjoint. Intuitively,  
$\Agents$ is a set of atomic terms representing agents, with generic elements $A,B, \ldots$. 
The set $\Prop$ is a set of atomic terms, representing atomic propositions, with generic elements $p,q\ldots$ 
The set $\Opterms^n$ for $n \in \Nat$ contains operator names, understood to have \emph{arity} $n$. 
A generic element of $\Opterms^n$ is written $o^n$ to indicate that operator $o$ has arity $n$. Given $\Sigma$, we define a set of terms $\terms$, with generic element $t$ 
and a set of formulas $\formulas$, with generic element $\phi,\psi, \ldots$.  
Formally, terms and formulas are specified by
$$ \begin{array}{l} 
t ::= A ~|~ o^n(t_1, \ldots, t_n)~|~ \phi \\
\phi ::= p~ |~ \neg \phi ~|~ \phi \land \phi~ |~   t \entails \phi~ |~ A \signs t~ |~ A\says \phi 
\end{array} 
$$
where $n \geq 0$ and $o^n$ is any operator in $\Opterms^n$ and 
$t_1, \ldots t_n$ are terms in $\terms$.
Note that $\formulas \subset \terms$, so every formula is also a term. 
Intuitively, terms not in $\formulas$ 
represent application specific content that is not purely logical, but may still be signed and may contain formulas as subterms. 
Boolean constructs other than the two included, such as  $\phi_1 \rimp \phi_2$ and $\phi_1 \lor \phi_2$, 
can be treated as abbreviations for formulas in the language 
in the usual way. 

 Atomic propositions $p$ are intended to represent assertions such as ``Alice has the obligation to pay Bob 30  Ethers by Dec 7, 2021.'' 
We envisage extensions of the logic tailored to representation of the content of contracts, and that this 
will involve a richer base logic of formulas, including quantifiers, action expressions and temporal and deontic operators. Since the 
present paper is concerned primarily with the signature process, this richer expressiveness has been 
abstracted to the set of atomic propositions. 

Intuitively,  $ t \entails \phi$ expresses that term $t$ ``entails'' formula $\phi$. 
In general, terms may represent both logical and non-logical content. For example, 
a term representing a contract may contain non-logical information such as a date of creation, the names of the parties, 
as well as logical content in the form of clauses that capture the consequences of the contract.  
The latter could correspond to formulas $\phi$ such that~$t \entails \phi$. 
The precise semantics of $t \entails \phi$ will be application specific. One application might include using 
$t$ to represent the (controlled) natural language text of a legal contract, and $\phi$ to represent its 
content in logical form as a specification of a smart contract. Alternately, $t$ might express a standard Electronic Data Interchange message 
in the form of a set of attribute-value pairs, and $\phi$ its intended logical semantics. 

The formula $A \signs t$  expresses that agent $A$ has ``signed" term $t$. 
Intuitively, this means that $A$ has applied one of their private signature keys to (a serialisation of) 
the term $t$, and that other parties who know the corresponding  public verification key is associated
to $A$ can verify that the signature is valid. 
Authentication of $A$ here might be simply because identity $A$ is semantically represented as 
identical to the public key, or because the association of $A$ to the public verification key is 
attested by a trusted certification authority. In the present paper,  the logic abstracts from such details. 
Note that we permit an arbitrary term to be signed, not just a formula. 

Finally, $A\says \phi$  expresses that agent $A$ ``assents to" or ``agrees to" formula $\phi$. Intuitively, this means that 
$A$ agrees to $\phi$ and its logical consequences. Typically, this will be because there exists evidence 
in the form of (cryptographically) signed content, using which, such agreement can be proved. 
In particular, if $A$ has signed a message that means (entails) $\phi$, it will follow that $A$ agrees to $\phi$. 
However, $A$ will also have to agree to facts that cannot reasonably be disputed, such as facts about what content other agents have signed. 
As noted above, the logic can be understood as dealing with a situation in which all signed content is available to all agents. 
We discuss possible interpretations of this operator at greater length in Section~\ref{sec:common}. 

The logic has the following axiom schemas%
\footnote{Note that the logic does not include quantification, but we get the effect of universal quantification from the fact that
every well-typed substitution instance of the schemas is an axiom.}  
and rules of inference. In the following, $\phi, \psi$ are formulas, $t$ is a term and $A,B$ are agents.
We write $\vdash \phi$ to mean that $\phi$ is derivable from axioms using the rules of inference given. 

~\\
\noindent
{\bf  Axioms:} \begin{enumerate}[label=\textbf{Ax\arabic*}]
\item \label{ax:prop} All substitution instances of tautologies of propositional logic 
\item \label{ax:fomulaentails} ~~~ $\phi \entails \phi$ 
\item \label{ax:entailsclosure} ~~~ $((t \entails \phi) \land (t \entails (\phi \rimp \psi))) \rimp (t \entails \psi)$ 
\item   \label{ax:signssays} ~~~ $((A \signs t)\land (t \entails \phi))\rimp A \says \phi$  
\item \label{ax:saysnormal} ~~~$(A \says \phi )\land (A \says (\phi \rimp \psi)) \rimp A \says \psi $
\item \label{ax:liftsigns} ~~~$  (B \signs t) \rimp A \says (B\signs t) $
\item \label{ax:liftentails} ~~~$  (t \entails \phi) \rimp A \says (t \entails \phi) $
\end{enumerate} 

\noindent 
{\bf Rules of Inference:} 
\begin{enumerate}[label=\textbf{R\arabic*}]
\item \label{r:MP} $\vdash \phi$ and $\vdash \phi \rimp \psi$ implies $\vdash\psi$. 
\item \label{r:Nentails} $\vdash \phi$  implies $\vdash t \entails \phi$. 
\item \label{r:Nsays} $\vdash \phi$  implies $\vdash A \says \phi$. 
\end{enumerate}

Note that axiom~\ref{ax:entailsclosure} and rule~\ref{r:Nentails} together state that ``$t\entails$'' is a normal modal operator for each term $t$. 
Similarly axiom~\ref{ax:saysnormal} and rule~\ref{r:Nsays} together state that ``$A\says$'' is a normal modal operator for each agent $A$. 
Axiom~\ref{ax:fomulaentails} says that a formula (as a term) entails itself. (Entailments of non-formula terms are application specific and are not 
constrained by the  logic.) 
Axiom~\ref{ax:signssays} says that if agent $A$ has signed $t$ then they assent to all entailments of term $t$. 
Axiom~\ref{ax:liftsigns} can be understood as stating that signed messages are indisputable, in the sense that if agent $B$ has
signed $t$ then agent $A$ must agree that $B$ signed $t$ --- agent $A$ is unable to deny that the signature exists. 
Finally, Axiom~\ref{ax:liftentails} states that agents assent to all facts about entailment; intuitively, this captures that all agents
are in agreement about the meaning of terms.

\section{Semantics} \label{sec:sem}

The logic can be given a Kripke style semantics as follows. Given the parameters $\Sigma = (\Agents, \Prop, \Opterms)$ where 
$\Agents$ is the set of  agent names, $\Prop$ is the (disjoint) subset of  atomic propositions, and $\Opterms$ is the ranked set of operators,  
the language is defined by a set of terms $\terms$, and a set of formula $\formulas$. 
A model for the language based in these parameters  is a tuple $\langle W, \Rsigns, \Rentails, \Rsays , \pi \rangle $, 
where  the components and their intuitive interpretations are as follows: 
\begin{itemize} 
\item $W$ is a set,  whose elements are called \emph{worlds}, 
\item $\Rsigns \subseteq W \times \Agents  \times \terms$ is a relation, such that 
$(w,A,t) \in \Rsigns$ represents that in world $w$, agent $A$ has signed term $t$, 
\item 
$\Rentails \subseteq \terms \times W$ is a relation, such that 
$(t,w) \in \Rentails$ represents that world $w$ is consistent with all the information
entailed by term $t$, 

\item 
$\Rsays \subseteq W\times \Agents \times W$ is a relation, such that 
$(w,A,w') \in \Rsays$ represents that world $w'$ is consistent with all that agent $A$ assents to in world $w$, 

\item $\pi : W  \rightarrow \powerset{\Prop} $ is an interpretation that associates each world with the set of atomic propositions 
holding at the world. 
\end{itemize} 
Note that the relation $\Rentails$ is not relativized to a world. Intuitively, the meaning of terms is independent of the state of the world, and is ``common knowledge" to all agents, who all ``speak the same language".  
We do not assume that, for a fixed world $w$, the set of $w'$ for which $(w,A,w') \in  \Rsays$ is non-empty. Intuitively, we allow that an agent assents to an inconsistency, in which case no worlds are consistent.

The semantics of the logic is given by a relation of satisfaction $M,w \models \phi$, 
where $M$ is a model, $w$ is a world of $M$ and $\phi$ is a formula. This relation is 
defined recursively by 
\begin{itemize} 
\item $M,w\models p$, for $p\in \Prop$, when $p \in \pi(w)$, 
\item $M,w\models \neg \phi$ if not $M,w \models \phi$,
\item $M,w\models \phi_1 \land \phi_2$ if $M,w\models \phi_1$ and  $M,w\models \phi_2$,  
\item $M, w\models A \signs t$ if $(w,A,t) \in \Rsigns$, 
\item $M, w\models t \entails \phi$ if $M,w' \models \phi$ for all $w'\in W$ such that $(t,w') \in \Rentails$, 
\item $M, w \models A\says \phi$ if $M,w' \models \phi$ for all $w' \in W$ such that $(w,A,w') \in \Rsays$. 
\end{itemize} 

A formula $\phi$ is \emph{valid} in a model $M$, written $M \models \phi$, if $M,w \models \phi$ for all worlds $w$ of $M$. 
A rule of inference is valid in a model $M$ if, for all worlds $w$ of $M$, if $M,w\models \alpha$ for
all formulas $\alpha$ in the antecedant of the rule, then $M,w\models \beta$ for the formula $\beta$ in the consequent.

In order to obtain models validating the axioms, we assume that a number of semantic constraints hold: 
\begin{itemize} 
\item[SC1.] For formulas $\phi$, we have $(\phi,w) \in \Rentails$ implies $M,w\models \phi$. 
\item[SC2.] If $(w,A,t)\in  \Rsigns $ then  $(w,A,w')\in  \Rsays $ implies $(t,w') \in \Rentails $.
\item[SC3.] If $(w,B,t)\in  \Rsigns $ and  $(w,A,w')\in  \Rsays $ then $(w',B,t) \in \Rsigns$. 
\end{itemize} 
Intuitively, SC1 says that a term that is also a formula entails that formula itself: every world consistent with what is entailed must satisfy the formula. 
SC2  expresses axiom~\ref{ax:signssays}  semantically: it says that if agent $A$ has signed $t$ then they assent to the entailments of term $t$, in the sense that 
any world $w$ consistent with what $A$ assents to must be consistent with these entailments.
SC3 expresses Axiom~\ref{ax:liftsigns} semantically. It says that if $B$ has signed $t$ in world $w$, then $B$ has also signed $t$ in  any world $w'$ that is consistent with  what $A$ says in world~$w$.

\begin{proposition} \label{prop:sound}
The axiom schemas~\ref{ax:prop}-\ref{ax:liftentails} 
and  rules of inference~\ref{r:MP}-\ref{r:Nsays}
are valid in models satisfying SC1-SC3.
\end{proposition}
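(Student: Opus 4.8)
The plan is to verify each axiom schema and each rule of inference in turn, checking that it is valid in every model $M = \langle W, \Rsigns, \Rentails, \Rsays, \pi \rangle$ satisfying SC1--SC3. The argument is a routine unwinding of the satisfaction clauses; the only places where the semantic constraints do real work are \ref{ax:fomulaentails}, \ref{ax:signssays}, and \ref{ax:liftsigns}, so I will spend most effort there.

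First I would dispatch the easy cases. Axiom \ref{ax:prop} holds because the clauses for $\neg$ and $\land$ are the classical ones, so any world is a two-valued propositional valuation on the formulas it satisfies, and hence satisfies every propositional tautology (and all substitution instances, since substituting formulas for propositional letters just changes which valuation we look at). Rule \ref{r:MP} is immediate from the $\neg$/$\land$ clauses. Axioms \ref{ax:entailsclosure} and \ref{ax:saysnormal}, together with rules \ref{r:Nentails} and \ref{r:Nsays}, are exactly the standard facts that a box defined by ``true at all $R$-successors'' is a normal modal operator: for \ref{ax:entailsclosure}, fix $w$ and assume $M,w \models (t\entails\phi)\land(t\entails(\phi\rimp\psi))$; then for every $w'$ with $(t,w')\in\Rentails$ we have $M,w'\models\phi$ and $M,w'\models\phi\rimp\psi$, so $M,w'\models\psi$, giving $M,w\models t\entails\psi$; \ref{ax:saysnormal} is the same argument through $\Rsays$. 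For \ref{r:Nentails}, if $\vdash\phi$ then (by the induction establishing soundness, or simply because we are proving validity of the rule) $M\models\phi$, so $M,w'\models\phi$ for all $w'$, in particular all $\Rentails$-successors of $t$, hence $M,w\models t\entails\phi$ for every $w$; \ref{r:Nsays} is identical through $\Rsays$. Axiom \ref{ax:fomulaentails}: given any $w$, to show $M,w\models\phi\entails\phi$ I must show $M,w'\models\phi$ for every $w'$ with $(\phi,w')\in\Rentails$, and since $\phi$ is a formula this is exactly SC1.

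Next the three ``interesting'' axioms. For \ref{ax:signssays}, fix $w$ and assume $M,w\models (A\signs t)\land(t\entails\phi)$, i.e.\ $(w,A,t)\in\Rsigns$ and $M,w''\models\phi$ for all $w''$ with $(t,w'')\in\Rentails$. To show $M,w\models A\says\phi$, take any $w'$ with $(w,A,w')\in\Rsays$; by SC2 (using $(w,A,t)\in\Rsigns$) we get $(t,w')\in\Rentails$, hence $M,w'\models\phi$, as required. For \ref{ax:liftsigns}, fix $w$ and assume $M,w\models B\signs t$, i.e.\ $(w,B,t)\in\Rsigns$; to show $M,w\models A\says(B\signs t)$ take any $w'$ with $(w,A,w')\in\Rsays$, and apply SC3 to conclude $(w',B,t)\in\Rsigns$, i.e.\ $M,w'\models B\signs t$. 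For \ref{ax:liftentails}, fix $w$ and assume $M,w\models t\entails\phi$; since $\Rentails$ is not relativized to a world, the truth of $t\entails\phi$ at $w$ depends only on $\Rentails$ and $\pi$, not on $w$, so in fact $M,w'\models t\entails\phi$ for \emph{every} world $w'$; in particular this holds for all $w'$ with $(w,A,w')\in\Rsays$, giving $M,w\models A\says(t\entails\phi)$ --- note this case needs no semantic constraint at all, only the world-independence of $\Rentails$.

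I do not anticipate a genuine obstacle: every step is a short semantic calculation, and the role of each of SC1--SC3 is pinned down in advance (SC1 $\leftrightarrow$ \ref{ax:fomulaentails}, SC2 $\leftrightarrow$ \ref{ax:signssays}, SC3 $\leftrightarrow$ \ref{ax:liftsigns}), with \ref{ax:liftentails} falling out of the design choice that $\Rentails$ carries no world argument. The one point worth stating carefully is the validity of the inference rules \ref{r:Nentails} and \ref{r:Nsays}: ``valid in $M$'' for a rule means the consequent holds at every world whenever the antecedent holds at every world, so from $M\models\phi$ (the hypothesis $\vdash\phi$ having been shown sound) we genuinely do get $M,w'\models\phi$ at \emph{all} $w'$, which is what the box clauses need. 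I would present the proof as a bulleted walk through the axioms and rules in the order \ref{ax:prop}, \ref{r:MP}, then the normality package (\ref{ax:entailsclosure}, \ref{ax:saysnormal}, \ref{r:Nentails}, \ref{r:Nsays}), then \ref{ax:fomulaentails}, \ref{ax:signssays}, \ref{ax:liftsigns}, \ref{ax:liftentails}, citing SC1, SC2, SC3 at exactly the points indicated above.
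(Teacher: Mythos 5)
Your proposal is correct and follows essentially the same route as the paper's own proof: the propositional and normal-modal cases are dispatched as routine, \ref{ax:fomulaentails} is read off from SC1, \ref{ax:signssays} from SC2, \ref{ax:liftsigns} from SC3, and \ref{ax:liftentails} from the fact that $\Rentails$ carries no world argument. Your explicit remark that validity of \ref{r:Nentails} and \ref{r:Nsays} should be read as ``if the premise holds at every world then so does the conclusion'' is a welcome clarification that the paper leaves implicit.
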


We discuss some further axioms that are valid with respect to the semantics in Section~\ref{sec:common}, but we make no attempt in this paper at completeness: our principal concern 
is to develop a minimal set of axioms that support our main focus of reasoning about contract signature processes.  

Condition SC1 may present some difficulties when constructing models, since it must be satisfied for the 
infinite set of formulas, and moreover refers to the semantics of formulas. Because SC2 places a lower bound on $\Rentails$, 
the trivial solution where there are no worlds $w$ such that $(\phi,w) \in \Rentails$ is not satisfactory. However, 
starting with any relation $R^0 \subseteq (\terms\setminus \formulas)\times W$ that expresses the
entailments of terms that are not formulas, it is possible to extend this relation to one that 
expresses the entailments of formulas so as to satisfy condition SC1. 

More precisely, given a model $M = \langle W, \Rsigns, \Rentails, \Rsays , \pi \rangle$  and a relation $R$,
write $M(R)$ for the result of replacing $\Rentails$ by $R$ in $M$, that is, 
$M(R) = \langle W, \Rsigns, R, \Rsays , \pi \rangle$.   
We can then express the extension claim as follows. 

\begin{proposition} \label{prop:extend}
Let $M$ be a model, and let 
$R^0 \subseteq (\terms\setminus \formulas)\times W$. There exists a relation 
 $R^\omega \subseteq \terms \times W$ such that $R^\omega \cap (\terms\setminus \formulas)\times W = R^0$ and
 for all formulas $\phi$ and worlds $w\in W$, we have 
 $(\phi,w) \in R^\omega$ iff $M(R^\omega) ,w\models \phi$. 
\end{proposition}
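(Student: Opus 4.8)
The plan is to obtain $R^\omega$ as the limit of an $\omega$-indexed sequence of relations $R^0,R^1,R^2,\dots$, stratified by the nesting depth of the $\entails$ operator. A straightforward monotone (Knaster--Tarski) fixpoint is not available here, because negation---together with the universal quantifiers hidden in the clauses for $t\entails\phi$ and $A\says\phi$---makes the operator ``replace the current relation by the set of formula/world pairs it makes true'' non-monotone; the stratification is what forces the iteration to converge anyway.

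I would first define, by mutual recursion on syntax, the \emph{entailment depth} $\edepth$ of terms and formulas: agents and atomic propositions have depth $0$; $\neg\psi$, $\psi_1\land\psi_2$, $A\says\psi$ and $o^n(t_1,\dots,t_n)$ have depth equal to the maximum of the depths of the displayed constituents; $A\signs t$ has depth $0$ \emph{regardless of} $t$; and $t\entails\psi$ has depth $\max(\edepth(t)+1,\edepth(\psi))$. The crux of the argument is a locality lemma: for every relation $R\subseteq\terms\times W$, every world $w$ and every formula $\phi$, whether $M(R),w\models\phi$ holds is determined by the restriction of $R$ to pairs $(t,w')$ with $\edepth(t)<\edepth(\phi)$. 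This is proved by structural induction on $\phi$. The propositional and $\says$ cases are immediate from the induction hypothesis (for $\says$ because $\edepth(A\says\psi)=\edepth(\psi)$); the $A\signs t$ case holds because the satisfaction clause for $\signs$ never evaluates $t$, so there is no dependence on $R$ at all, matching depth $0$; and for $\phi=t\entails\psi$, evaluating $\phi$ queries $R$ only at the single term $t$---which has depth $\le\edepth(\phi)-1$---and otherwise recursively evaluates $\psi$ at various worlds, which the induction hypothesis covers since $\edepth(\psi)\le\edepth(\phi)$.

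Taking $R^0$ to be the given relation (viewed as a subset of $\terms\times W$ containing no formula pairs), I would set $R^{n+1}:=R^0\cup\{(\phi,w):\phi\in\formulas,\ M(R^n),w\models\phi\}$. By construction every $R^n$ agrees with $R^0$ on non-formula terms. Using the locality lemma, a second induction---this time on $n$---shows that the restriction of $R^k$ to formula/world pairs $(\phi,w)$ with $\edepth(\phi)\le n$ is the same for all $k\ge n+1$. Hence for each individual pair $(t,w)$ the membership $(t,w)\in R^k$ is eventually constant (for a formula it has stabilised by $k=\edepth(t)+1$, and for a non-formula term $R^k$ is constantly $R^0$), and we may define $R^\omega$ to be this limit relation. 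Then $R^\omega\cap((\terms\setminus\formulas)\times W)=R^0$, and for a formula $\phi$ we have $(\phi,w)\in R^\omega$ iff $(\phi,w)\in R^{\edepth(\phi)+1}$ iff $M(R^{\edepth(\phi)}),w\models\phi$.

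Finally I would verify the fixpoint identity $(\phi,w)\in R^\omega\iff M(R^\omega),w\models\phi$. Write $d=\edepth(\phi)$. The stabilisation statement shows that $R^{d}$ and $R^\omega$ agree on every term of entailment depth $\le d-1$: they agree trivially on non-formula terms, and on a formula $\psi$ with $\edepth(\psi)\le d-1$ both already equal the stabilised value (since $d\ge\edepth(\psi)+1$). By the locality lemma this gives $M(R^{d}),w\models\phi\iff M(R^\omega),w\models\phi$, which combined with the previous display yields the claim. I expect the main obstacle to be stating $\edepth$ correctly---in particular noticing that a signed term contributes depth $0$ even when it is itself a deeply nested formula---so that the locality lemma is actually true; once that lemma is in hand, both the stabilisation of the iteration and the verification of the fixpoint property are routine bookkeeping with nested inductions.
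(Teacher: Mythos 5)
Your proposal is correct and follows essentially the same route as the paper: stratify terms by an entailment depth under which $A \signs t$ ignores the depth of $t$ and $t \entails \psi$ strictly exceeds the depth of $t$, prove a locality lemma stating that satisfaction of $\phi$ in $M(R)$ depends only on $R$ restricted to terms of lower depth, and take the limit of the resulting iteration. The only differences are bookkeeping: the paper normalizes the depth function slightly differently (all formulas get depth at least $1$, all non-formula terms depth $0$) and adds only the depth-$(i{+}1)$ formulas at stage $i{+}1$, which makes the stabilization you establish by a second induction immediate.
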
 

As the proof is somewhat technical, using a fixed point construction, it is deferred to an
appendix. (The only place in the body of this paper where we need this result is when constructing a model in 
Example~\ref{ex:cex}.)

\section{Offer and Acceptance} \label{sec:offer}

One of the criteria in law for formation of a contract, and our primary focus in this paper, is a `meeting of the minds' concerning the terms of the 
contract. When the terms of the contract between parties $A$ and $B$ are expressed by the formula $\phi$, we may represent this in our logic as $(A \says \phi) \land (B \says \phi)$, 
i.e., both $A$ and $B$ assent to $\phi$. We take this condition as a necessary outcome of any process used by the parties to enter into 
contractual relations, and enquire into how processes for contract signature meet this condition.  

A common process whereby two parties $A,B$ enter into a contract, accepted in law as demonstrating the 
criterion of a ``meeting of the minds'', is for $A$ to make an offer of the contract terms, and for $B$ to accept. 
When implemented in a network setting, or when the parties require evidence of the communication, we 
expect that both the offer and acceptance will be signed. 

We could attempt to express $A$'s offer of terms $\phi$ in the logic as $A \signs \phi$. By~\ref{ax:fomulaentails} 
and~\ref{ax:signssays}, this implies $A \says \phi$, so $A$ assents to the terms $\phi$.

From $A$'s point of view, this is too strong, since it has the risk that if $B$ does not accept the offer, 
$A$ will remain bound to the terms $\phi$. For example, if $\phi$ expresses 
`$A$ shall pay \$US 100 to $B$ and $B$ shall transfer JPY 10,000 to $A$', then 
 $A \says\phi$ implies that $A$ agrees (amongst other things) that $A$ shall pay \$US 100 to $B$. Party $A$ would not want to 
 be held to account for this apparent promise if $B$ does not accept the offer. 
 
We can also conclude that $A$ agrees that `$B$ shall transfer JPY 10,000 to $A$'. 
 In the absence of a matching agreement by $B$, this is pragmatically somewhat peculiar. 
Party $A$ cannot, in general, make promises on $B$'s behalf, and 
unless $A$ is in a position to issue orders to $B$, a mere statement by $A$ will not 
have the effect of placing $B$ under any obligation. 

One might argue that since the entirety of $A$'s original statement, as signed, is unenforceable if $B$ does not accept the offer, 
no part of it is enforceable. Indeed, contract law would rule that no \emph{contract} exists in this 
circumstance, so even if $A$ has made a promise, no legal action will be taken to enforce it. 
Still, notwithstanding the lack of legal enforcement, $A$ would want to avoid even the appearance of 
having a moral obligation to $B$ if the offer is not accepted. 

To avoid being committed to a promise if $B$ does not accept, $A$ could make their offer 
conditional on $B$'s acceptance. This can be expressed in the logic as 
\begin{equation} 
A \signs ((B \signs \phi) \rimp \phi)\label{eq:condoffer}
\end{equation}
from which we obtain 
$A \says  ((B \signs \phi) \rimp \phi)$, by~\ref{ax:fomulaentails} 
and~\ref{ax:signssays}, as before. 

Now, when $B$ accepts the offer with $B \signs \phi$ (implying that $B \says \phi$), 
we deduce using~\ref{ax:liftsigns} that 
$A\says (B \signs \phi)$. 
From this and (\ref{eq:condoffer}) we obtain using~\ref{ax:saysnormal}, that 
$A \says \phi$. Thus, we have $(A \says \phi) \land (B \says \phi)$, as required for a meeting of the minds. 

Note that, at the time $B$ signs, we already have $A \signs ((B \signs \phi) \rimp \phi)$, 
so $B$ can be  assured that they will be able to hold $A$ to the terms of the contract, 
and the risk to $A$ in signing $\phi$ directly does not apply to $B$. 

It is worth noting that the argument works also with a slightly weaker form of the content signed by 
$A$: 
\begin{equation} 
A \signs ((B \says \phi) \rimp \phi)\label{eq:condoffer2}
\end{equation}
From axioms~\ref{ax:fomulaentails} and~\ref{ax:signssays} we have $\vdash (B \signs \phi) \rimp B \says \phi $.  
Applying rule~\ref{r:Nsays}, we get $\vdash A \says ((B \signs \phi) \rimp B \says \phi) $.  
Thus, once $B \signs \phi$ we derive, as before, $ A \says (B \signs \phi)$, 
and can conclude that  $ A \says (B \says \phi)$, hence $A \says \phi$, 
using~\ref{ax:fomulaentails}, \ref{ax:signssays}, \ref{ax:saysnormal} and   (\ref{eq:condoffer2}). 

The above approaches deal with an offer and acceptance between two parties. When the contract has a larger number of parties, 
some more care is required. Consider a contract between three parties $A,B$ and $C$,  with the contract offered by $A$ and $B$ and $C$ accepting. 
One way to generalize from the two-party case would be with assertions $$A \signs (((B \signs \phi) \land (C \signs \phi)) \rimp  \phi)$$ and 
$B \signs \phi$ and $C \signs \phi$. However, this places both $B$ and $C$ in the situation of assenting to $\phi$ when they are not guaranteed that 
the contract will in fact be formed. A better alternative is to chain the conditional assertions signed, using 
$$A \signs ((
                  (B \signs ((C \signs \phi) \rimp \phi)) \land 
                  (C \signs \phi)
                 ) 
                 \rimp  \phi)$$ and 
 and $B \signs ((C \signs \phi) \rimp \phi)$  and $C \signs \phi$. 
 If $A$ passes their signed message to $A$ and $B$ and 
 $B$ then passes their signed message to $C$, then $C$ can be assured their their unconditional signature will validate the contract. 
 Similarly, $B$ can be assured that the contract will be validated once $C$ signs $\phi$, so $B$'s conditional statement can be safely made.                

This idea can be generalized to $n$ parties $A_1 \ldots A_n$, using formulas 
$\sigma_1, \ldots, \sigma_n$ defined by $\sigma_n = A_n \signs \phi$ 
and $\sigma_{k} = A_{k} \signs ( (\bigwedge_{i=k+1 \ldots n} \sigma_i) \rimp \phi)$
for $k=1 \ldots n-1$. However, this approach is highly asymmetric, and requires, for safety, 
that the signed messages be passed in a linear chain between the agents, with $A_{k+1}$ delaying their signature  
until they have received the signed messages supporting $\sigma_1, \ldots \sigma_k$. 
In the following section, we develop a more symmetric approach to contract signature. 

\section{Representing Counterpart Signatures} \label{cpt1}

As an alternative to the offer and acceptance process, 
we now consider signature in counterparts. 
Suppose that $A$ and $B$, operating in a network setting, 
wish to sign an agreement whose meaning is captured by the 
formula $\phi$. Moreover, unlike the offer-acceptance approach, 
we would like the parties to sign the \emph{same}, or at least \emph{symmetric}  content. 

An approach that does not work is for $A$ and $B$ to independently sign $\phi$, i.e.,
 $A \signs \phi$ and $B \signs \phi$, and then exchange these signatures. 
As noted above, we can then derive $A \says \phi$ and $B \says \phi$, so that 
 both $A$ and $B$ agree to the terms of the contract as soon as they sign. 
For both parties, this has the problem discussed above for the situation in an offer-acceptance process in which the 
offeror signs $\phi$. 
 We do not wish either party to have agreed to the contract until the other also has agreed. 
 
Mirroring the conditional account of offer and acceptance above, 
 we could try to make the version of the document that $A$ signs conditional on $B$ having signed, and \emph{vice versa}: 
 $$ A \signs ((B \signs \phi) \rimp \phi) ~~\text{and}~~B \signs ((A \signs \phi) \rimp \phi) ~.$$
 This will not work, since it still relies upon production of the direct signatures  $A \signs \phi$ and $B \signs \phi$
that we are trying to avoid. (Were we to add one or both of 
these, we would have a redundant form of the previous offer and acceptance process.)  

An alternative is to work with the weaker form of the conditions, as in  
 $$ A \signs ((B \says \phi) \rimp \phi) ~~\text{and}~~B \signs ((A \says \phi) \rimp \phi) ~.$$
and hope that we can then derive $A \says \phi$ and $B \says \phi$. 
Unfortunately, this also does not work. By \ref{ax:signssays}, we can derive 
 $$ A \says((B \says \phi) \rimp \phi) ~~\text{and}~~B \says ((A \says \phi) \rimp \phi) ~.$$
However, this is too weak: these assertions have a model, satisfying our axioms, in which neither 
$ A \says \phi$ nor $B \says \phi$. 

\begin{example} \label{ex:cex}
Suppose $\phi$ is the atomic proposition $\mathtt{p}$, and let 
$M= \langle W, \Rsigns, \Rentails, \Rsays , \pi \rangle $, be a model with $W = \{w_0,w_1\}$, 
$$\Rsigns = \{(w,A,(B \says \phi) \rimp \phi), (w,B, (A \says \phi) \rimp \phi)~|~w\in W\}~,$$
$$\Rsays = (W\times \{A,B\} \times W)~,$$ 
and $\pi$ defined by $\pi(w_0) = \emptyset$ and 
$\pi(w_1) = \{\mathtt{p}\}$. 
By the construction given in Section~\ref{sec:sem}, 
given the relation $R^0 = (\terms \setminus \formulas) \times W$, 
we may construct a relation $\Rentails = R^\omega$ such that 
for all formulas $\psi$, we have $(\psi,w) \in \Rentails$ iff 
$M,w \models \psi$.  (Intuitively, the particular starting point $R^0$ we have selected here 
takes all terms that are not formulas to have only trivial, i.e., valid, entailments.)  

Note that for all worlds $w$, we have  $M,w\models  A \signs ((B \says \phi) \rimp \phi)$, and $M,w \models B \signs ((A \says \phi) \rimp \phi)$. Moreover, for all $w\in W$ 
we have $M, w\models \neg (A \says \phi)$, since $(w, A,w_0) \in \Rsays$ and $M, w_0, \models \neg \phi$. 
Similarly, $M, w\models \neg (B \says \phi)$ for all $w \in W$. 
Hence, all the assumptions of the proposed approach to counterpart signatures hold, but the desired conclusion that $M,w \models (A \says \phi) \land (B\says \phi)$ does not. 

The model $M$ has been constructed to satisfy constraint SC1. We show that it also satisfies the constraints SC2-SC3, 
from which it follows using Proposition~\ref{prop:sound} and that it does not follow using the axioms and rules of inference 
that the conclusion $A \says \phi \land B \says \phi$ can be derived from the conditional signatures. 

For constraint SC2, note that we have $(w,X,t) \in\Rsigns$ iff 
either $X = A$ and $t = (B \says \phi) \rimp \phi)$ or 
$X = B$ and $t = (A \says \phi) \rimp \phi)$. 
We need to show that in these cases, if $(w,X,w')\in \Rsays$ then $(t, w') \in \Rentails$. 
In both cases $t$ is a formula $\psi$  in the form of an implication whose antecedent is false 
at all worlds, and therefore $\psi$ is true at all worlds. Since we have $(\psi,u) \in \Rentails$ iff $M,u \models \psi$, we have
that $(\psi,u) \in \Rentails$ for all worlds $u$ in $W$. It follows that SC2 holds. 

For constraint SC3, note that the model satisfies $(w,X,t) \in\Rsigns$ iff $(w',X,t) \in\Rsigns$ for all worlds $w'\in W$.  
It follows that SC3 holds. 
\qed
\end{example}

As an alternative to relying on the statements made by the parties, our resolution of the problem is to make the contract itself assert that it is valid if signed by both parties. This requires 
allowing the contract to be self-referential. We develop the solution first in the abstract, and propose a specific  concrete syntax and semantics for self-reference  in the next section. 
For our abstract presentation, it suffices to capture self-reference by means of an assumption about the entailment relation $\entails$. 
Let $c$ be a term representing the contract itself, and 
let $\phi$ be a formula capturing the terms of the contract. 
 We assume that the following holds: 
$$ c \entails (((A \signs c) \land (B \signs c)) \rimp \phi)~.$$
Intuitively, this says that the contract $c$ entails that, once both $A$ and $B$ have signed it, 
$\phi$ holds. 

Suppose now that we have $A \signs c$ and $B \signs c$. We show that it is now possible to derive $A \says \phi$ and $B \says \phi$, so that both $A$ and $B$ 
assent to $\phi$.  Note first that from $A \signs c$ and the above assumption,  we have 
$$A \says (((A \signs c) \land (B \signs c)) \rimp \phi)$$
by~\ref{ax:signssays}.  Using \ref{ax:liftsigns} we also have that 
$A \says (A \signs c)$  and $ A\says (B \signs c)$. 
By normality of $\says$, we  deduce $A \says \phi$. A 
similar argument shows $B \says \phi$. 
This establishes the following: 

\begin{proposition} 
$\vdash ((c \entails (((A \signs c) \land (B \signs c)) \rimp \phi)) \land (A \signs c) \land (B \signs c)) \rimp ((A\says \phi) \land (B\says \phi))$
\end{proposition}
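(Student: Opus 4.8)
The plan is to reason hypothetically: assume the three conjuncts of the antecedent --- namely $c \entails (((A \signs c) \land (B \signs c)) \rimp \phi)$, $A \signs c$, and $B \signs c$ --- derive $(A \says \phi) \land (B \says \phi)$ from them, and then discharge the hypotheses. This is legitimate because every step below is either an instance of an axiom schema combined with \ref{r:MP}, or an application of \ref{r:Nsays} to a genuine propositional tautology (never to a formula depending on the hypotheses); hence the derivation converts, in the standard deduction-theorem way, into a proof of the stated implication.

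First I would handle agent $A$. From the hypotheses $A \signs c$ and $c \entails (((A \signs c) \land (B \signs c)) \rimp \phi)$, axiom~\ref{ax:signssays} (instantiated with $t := c$ and the formula $((A \signs c) \land (B \signs c)) \rimp \phi$) yields $A \says (((A \signs c) \land (B \signs c)) \rimp \phi)$. Next, axiom~\ref{ax:liftsigns} applied to the hypotheses $A \signs c$ and $B \signs c$ respectively gives $A \says (A \signs c)$ and $A \says (B \signs c)$.

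The one step needing a small argument is aggregating these two facts under $A \says$, since conjunction-introduction inside the modality is not itself an axiom. I would obtain it from normality of $\says$, i.e.\ from \ref{ax:saysnormal} together with \ref{r:Nsays}: the formula $(A \signs c) \rimp ((B \signs c) \rimp ((A \signs c) \land (B \signs c)))$ is a propositional tautology, so \ref{r:Nsays} gives $A \says$ of it, and two applications of \ref{ax:saysnormal} with $A \says (A \signs c)$ and $A \says (B \signs c)$ yield $A \says ((A \signs c) \land (B \signs c))$. One further application of \ref{ax:saysnormal}, using $A \says (((A \signs c) \land (B \signs c)) \rimp \phi)$, then delivers $A \says \phi$.

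By the symmetric argument --- swapping $A$ and $B$, using the same entailment hypothesis (whose antecedent is already the symmetric conjunction $(A \signs c) \land (B \signs c)$), applying \ref{ax:signssays} to $B \signs c$, \ref{ax:liftsigns} to both signing hypotheses, and normality of $\says$ as above --- I obtain $B \says \phi$. Conjoining the two conclusions via \ref{ax:prop} and \ref{r:MP} gives $(A \says \phi) \land (B \says \phi)$, and discharging the three hypotheses finishes the derivation. I do not expect a genuine obstacle here; the only point requiring attention is precisely the derivation of conjunction-introduction under $\says$ from normality, rather than treating it as primitive.
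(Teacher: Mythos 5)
Your proposal is correct and follows essentially the same route as the paper: assume the antecedent, obtain $A \says (((A \signs c) \land (B \signs c)) \rimp \phi)$ from \ref{ax:signssays}, lift the two signatures under $A\says$ via \ref{ax:liftsigns}, combine by normality of $\says$ to get $A \says \phi$, and argue symmetrically for $B$. The only difference is that you spell out the conjunction-introduction under $\says$ (via \ref{r:Nsays} on a tautology plus \ref{ax:saysnormal}) and the deduction-theorem discharge, both of which the paper leaves implicit under the phrase ``by normality of $\says$.''
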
 

The question now arises as to how we obtain a term $c$ such that $c \entails (((A \signs c) \land (B \signs c)) \rimp \phi)$. 
One possible answer is that we obtain this by \emph{fiat}. The entailment relation $\entails$ is application-specific, so  we could 
introduce a ternary operator $\mathtt{contract\_sic}$ (for ``contract signable in counterparts'') and restrict to models
such that the desired entailment holds for the term $c=\mathtt{contract\_sic}(A,B,\phi)$. For example, the legal system governing the 
contract could establish the convention (e.g., by means of legislation or regulatory ruling) 
that terms of this form have the desired entailment. While this has the desired effect, it leaves the parties dependent on their external environment, 
and it  remains unresolved how they may proceed when operating in an environment that does not have such a convention in place. 
In what follows, we show that by extending the logic with a capability for self-reference, it becomes possible to identify a natural 
formula $c$ that \emph{necessarily} satisfies the desired entailment.

\section{A syntax and semantics for self-reference} \label{sec:self-ref}

We now develop a specific syntax and semantics for self-referential terms. We extend the syntax given 
above. The language is now parameterized by a tuple $\Sigma = (\Agents, \Prop, \Opterms,\Var )$ where 
all components are as above, but we add a set $\Var$ of variables, with generic element $x,y,\ldots$. 
We extend the syntax of terms and formulas by modifying the definition to the following: 
$$ \begin{array}{l} 
t ::= x ~|~A ~|~ o^n(t_1, \ldots, t_n)~|~ \phi \\
\phi ::= p~ |~ \neg \phi ~|~ \phi \land \phi~ |~   t \entails \phi~ |~ A \signs t~ |~ A\says \phi ~|~ \this x . \phi
\end{array} 
$$
Here variables $x$ have been added to the base case for terms $t$. 
There is also a new binary operator $\this$ which, when applied to a variable $x$ and a formula $\phi$, produces a formula 
written $\this x . \phi $\,. 
Note that variables may not may appear in the base case of the recursion for formulas --- only propositions $p$ in $\Prop$ may do so. 
The reason for this restriction is to avoid paradox, as explained below. 
Since variables do appear in the base case for terms $t$, they may thereby may appear in the 
formulas $A \signs t$ and $t \entails \phi$ within the subterms~$t$. 

Intuitively, $\this x . \phi $ says that $\phi$ holds, where, in the context of $\phi$, the variable $x$ refers to the formula  $\this x . \phi $. 
Semantically, we think of the denotation of $x$ as a term, i.e., as pure syntax.

An occurrence of a variable $x$ in a term $t$ is said to be \emph{free} if it is not inside any subterm of $t$ of the form $\this x. \phi$. 
Substitution of a term $t$ for the free occurrences of variable $x$ in a term $u$, denoted $u[x \mapsto t]$, is defined by the 
usual recursion. In particular, $(\this y (\phi))[x\mapsto t] = \this y (\phi)$ when $y=x$ and 
$(\this y (\phi))[x\mapsto t] = \this y (\phi[x\mapsto t] )$ otherwise. 
For all other cases, the definition passes the substitution down to all direct subterms, 
e.g., $(u\entails \phi)[x \mapsto t] = (u[x \mapsto t]) \entails (\phi[x \mapsto t])$. 
We define a formula to be a \emph{sentence} if it has no free variables. 

To extend the semantics, we restrict the application of the satisfaction relation to sentences.%
\footnote{At the cost of adding some complexity by adding an interpretation of variables on the left of the 
relation we could extend this to all formulas, but we will not need this expressiveness for our purposes in this paper.}
(Note that every formula in the previous, more restricted syntax, 
is a sentence, since it contains no variables, so this still encompasses the previous semantics.) 
The semantics is extended by adding to the definition above the case 
\begin{itemize} 
\item $M, w\models \this x. \phi$ if $M,w \models \phi[x\mapsto  \this x. \phi]$. 
\end{itemize} 
That is, $\this x. \phi$ holds if $\phi$ holds, with the term  $\this x. \phi$ substituted for free ocurrences of $x$ in $\phi$. 
In effect, this makes such occurrences equivalent to a reference to the formula $\this x. \phi$. 

This semantics may appear to be viciously recursive, making the interpretation of $\this x. \phi$ depend on the 
semantics of a formula $\phi[x\mapsto  \this x. \phi]$ that may itself contain the subformula  $\this x. \phi$. 
However, we note that the syntactic restrictions adopted prevents this from arising. Recall that the variable $x$ may occur only in terms 
$u$ appearing in subformulas of $\phi$ of the forms $A \signs u$ for some agent $A$, 
or $u\entails \psi$ for some formula $\psi$. The semantic clauses for these cases refer to the relations $\Rsigns$ and $\Rentails$ in way that treats
$u$ syntactically, without further decomposition that would result in a reinvocation of the semantic clause for $\this x. \phi$. The recursion is therefore not vicious.  

More formally, define the \emph{semantic tree} for a satisfaction expression  $M,w \models \phi$ to be the tree with nodes labelled by expressions 
of the form $M, w'\models \psi$, that has root labelled $M,w \models \phi$, and in which a node labelled by satisfaction expression $M, w'\models \psi$ has as children a node for  
each  satisfaction expression called recursively by the  definition of satisfaction (i.e., that occurs on the right hand side of the rule for $M, w'\models \psi$). 
The leaves of such a tree are the nodes labelled by a satisfaction expression that makes no recursive calls, i.e., the cases 
for sentences $p$ and $A \signs t$. The following shows that the recursion defining  $M,w \models \phi$ is well-founded. 

\begin{proposition} \label{prop:finheight}
For every sentence $\phi$, model $M$ and world $w$, the semantic tree for $M,w\models \phi$ has finite height. 
\end{proposition}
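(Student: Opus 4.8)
The plan is to define a suitable complexity measure on sentences that strictly decreases along every edge of the semantic tree, and then invoke König's lemma (together with finite branching) to conclude finite height. The key observation, already flagged in the surrounding text, is that the only semantic clause which does \emph{not} syntactically decrease the formula is the clause for $\this x.\phi$, which passes from $\this x.\phi$ to $\phi[x \mapsto \this x.\phi]$; and the substituted term $\this x.\phi$ is thereafter only inserted into positions inside subterms $u$ occurring in subformulas $A \signs u$ or $u \entails \psi$. Crucially, the semantic clauses for $A \signs u$ and $u \entails \psi$ never recurse into $u$: the clause for $A \signs u$ makes no recursive call at all, and the clause for $u \entails \psi$ recurses only into $\psi$, which does not contain the freshly substituted copies of $\this x.\phi$. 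So the substituted material is semantically inert.

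The cleanest way I would make this precise is to define, for a sentence $\phi$, a measure $\mu(\phi)$ that counts occurrences of the logical constructs $p$, $\neg$, $\land$, $\entails$, $\signs$, $\says$, and $\this$ that lie \emph{outside} of any term position — i.e., outside the scope of a $\signs$ or to the left of an $\entails$. Equivalently, $\mu$ is defined by the recursion $\mu(p)=1$, $\mu(\neg\phi)=\mu(\phi)+1$, $\mu(\phi_1\land\phi_2)=\mu(\phi_1)+\mu(\phi_2)+1$, $\mu(A\signs t)=1$, $\mu(t\entails\phi)=\mu(\phi)+1$, $\mu(A\says\phi)=\mu(\phi)+1$, and $\mu(\this x.\phi)=\mu(\phi)+1$, so that the term argument $t$ contributes nothing. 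First I would check that for each clause of the satisfaction relation \emph{other} than $\this$, every recursive call is to a sentence of strictly smaller $\mu$: this is immediate since $\mu(\psi) < \mu(\phi)$ whenever $\psi$ is the relevant direct subformula. Then the main lemma: $\mu(\phi[x \mapsto \this x.\phi]) = \mu(\phi)$. This holds because $x$ occurs in $\phi$ only inside term positions (inside the $t$ of some $A\signs t$ or $t\entails\psi$, by the syntactic restriction that variables do not appear in the base case of the formula grammar), and $\mu$ ignores everything in term positions; so the substitution changes only the inert part of the formula and leaves $\mu$ fixed. Hence along the $\this$ edge, $\mu(\this x.\phi) = \mu(\phi)+1 > \mu(\phi) = \mu(\phi[x\mapsto\this x.\phi])$, i.e., $\mu$ also strictly decreases across that edge.

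With $\mu$ strictly decreasing along every edge of the semantic tree and $\mu$ valued in $\Nat$, every branch has length at most $\mu(\phi)$, so the tree has height at most $\mu(\phi)$, which is finite. (Finite branching is clear from inspection of the clauses — the $\land$ clause branches into two, $\entails$ and $\says$ into one, etc. — so one could alternatively phrase the conclusion via König's lemma, but the direct bound on branch length already suffices and is cleaner.) The main obstacle, and the step deserving the most care, is the substitution lemma $\mu(\phi[x\mapsto\this x.\phi]) = \mu(\phi)$: it requires a small induction on the structure of $\phi$, using the induction hypothesis for the $\entails$, $\says$, $\neg$, $\land$ and $\this$ cases, and using the syntactic restriction on where variables may occur to handle the $A\signs t$ and $t\entails\psi$ cases — precisely, one needs that the substitution into the term $t$ does not affect $\mu$ because $\mu(A\signs t)$ and the term-part of $t\entails\psi$ are defined independently of $t$. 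One subtlety to note: the substitution $\phi[x\mapsto\this x.\phi]$ may be applied to a formula $\phi$ that is not itself a sentence (it has the free variable $x$), so strictly the measure $\mu$ and the substitution lemma should be stated for arbitrary formulas, not just sentences; this is harmless since $\mu$ is defined by the same recursion regardless, and the bound variable convention means no capture issues arise in the $\this y.\psi$ case.
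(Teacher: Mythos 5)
Your proof is correct and rests on the same key insight as the paper's: free variables can occur only in term positions (inside the $t$ of $A \signs t$ or $t \entails \psi$), the semantic clauses never recurse into those positions, and therefore a measure that ignores term positions is invariant under the substitution $\phi[x\mapsto \this x.\phi]$. Where you differ is in the choice of measure: you use a single additive weight $\mu$ that strictly decreases along \emph{every} edge of the semantic tree, giving the immediate bound that every branch has length at most $\mu(\phi)$; the paper instead uses the $\this$-nesting depth $\depth(\phi)$ (which decreases only across the $\this$ edge) paired lexicographically with formula size, and runs a well-founded induction on that pair. Your single measure is a mild simplification and yields an explicit quantitative bound on the height, which the paper's argument does not state. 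Two small points. First, your parenthetical claim that the tree is finitely branching is false: the clauses for $t\entails\phi$ and $A\says\phi$ produce one child per world in a possibly infinite set of worlds, a point the paper makes explicitly; you correctly note that this does not matter since a uniform bound on branch length bounds the height without K\"onig's lemma, but you should drop the finite-branching remark rather than hedge it. Second, your informal gloss that the $\psi$ in $u\entails\psi$ ``does not contain the freshly substituted copies'' is imprecise --- $\psi$ may well contain them, but again only in term positions of $\psi$ --- though your formal substitution lemma, proved by induction on the whole formula, handles this correctly.
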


This result critically uses the fact that variables $x$ appear only in syntactic positions in formulas. 
A serious problem for the semantics would arise if we were to allow $x$ to occur more generally. For example, $\this x. \neg x$ is essentially the famous ``Liar Paradox'' \cite{sep-liar-paradox}, 
since it effectively states ``This formula is false". Applying the above semantics would yield $M, w\models \this x. \neg x$ iff  $M, w\models \neg (\this x. \neg x)$
iff not $M, w\models \this x. \neg x$, 
making the semantics itself inconsistent! 

 In an effort to give the most general possible solution to the Liar, for languages containing a truth predicate, 
a variety of approaches have been proposed, including hierarchies of languages and meta-languages \cite{Tarski}, 
fixed point semantics \cite{Kripke75}, or use of non-standard set theories \cite{TheLiar}. The scope of these approaches
is significantly beyond our needs, since our logic has no truth operator. 
One could attempt to follow the $\mu$-calculus \cite{Kozen83} and require that occurrences of $x$ inside $\phi$ must be in positive position for $\this x. \phi$ to be well-formed. 
We have not pursued such approaches here because we deliberately wish to treat $x$ semantically as a term, i.e., a piece of syntax, rather than as a property, as in the 
$\mu$-calculus. 

Having introduced the new self-reference construct with the above semantics, we get a new axiom for the logic: 

\begin{enumerate}[label=\textbf{Ax\arabic*}]
 \setcounter{enumi}{7}
\item \label{ax:self} $(\this x. \phi) \dimp \phi[x\mapsto (\this x. \phi)]$
\end{enumerate}

\begin{proposition} \ref{ax:self} is valid. 
\end{proposition}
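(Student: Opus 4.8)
The plan is to observe that \ref{ax:self} is essentially a direct transcription of the new semantic clause for $\this$, so the proof amounts to a one-line unfolding of the definition of satisfaction, together with a small remark about which formulas $\models$ is defined on.

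First I would pin down the side condition under which the validity claim is meaningful: for $\this x.\phi$ to be a sentence (so that $M,w\models\this x.\phi$ is defined), the only variable that may occur free in $\phi$ is $x$; granting this, the substitution instance $\phi[x\mapsto(\this x.\phi)]$ is also a sentence, since substituting the closed term $\this x.\phi$ for the free occurrences of $x$ removes the only free variable. I take this as the implicit restriction on the schema. With both sides of the biconditional being sentences, Proposition~\ref{prop:finheight} guarantees that $M,w\models\this x.\phi$ and $M,w\models\phi[x\mapsto(\this x.\phi)]$ are well-defined for every model $M$ and world $w$, so the statement $M,w\models(\this x.\phi)\dimp\phi[x\mapsto(\this x.\phi)]$ makes sense.

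Next, fix an arbitrary model $M$ and world $w$. By the clause added to the definition of satisfaction, $M,w\models\this x.\phi$ holds iff $M,w\models\phi[x\mapsto(\this x.\phi)]$. Since $\psi_1\dimp\psi_2$ abbreviates $(\psi_1\rimp\psi_2)\land(\psi_2\rimp\psi_1)$, the Boolean clauses of the semantics give $M,w\models\psi_1\dimp\psi_2$ iff ($M,w\models\psi_1 \Leftrightarrow M,w\models\psi_2$). Instantiating $\psi_1:=\this x.\phi$ and $\psi_2:=\phi[x\mapsto(\this x.\phi)]$ and combining with the previous sentence yields $M,w\models(\this x.\phi)\dimp\phi[x\mapsto(\this x.\phi)]$. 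Since $M$ and $w$ were arbitrary, \ref{ax:self} is valid.

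There is no genuine obstacle here: the work was already done in setting up the semantics so that the $\this$ clause is well-founded (Proposition~\ref{prop:finheight}) and so that the substitution $\phi[x\mapsto\cdot]$ is defined on exactly the syntactic positions where $x$ can occur. Notably, the argument uses nothing about the relations $\Rsigns$, $\Rentails$, $\Rsays$ or the constraints SC1--SC3. The only point requiring a moment's care is the bookkeeping about free variables, i.e., verifying that both sides of the biconditional are genuine sentences so that $\models$ applies.
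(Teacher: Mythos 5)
Your proof is correct and takes the same route as the paper, whose entire proof is ``Direct from the semantics'': the axiom is just the satisfaction clause for $\this x.\phi$ read as a biconditional. Your additional remarks about sentencehood and well-foundedness (via Proposition~\ref{prop:finheight}) are sound bookkeeping but not a different argument.
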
 

\begin{proof} 
Direct from the semantics. 
\end{proof}

\section{Application of self-reference to counterpart signatures} \label{cpt2}

In Section~\ref{cpt1}, we already gave the structure of the argument that individually signed copies of a contract $c$ 
imply the agents' assent to the logical content $\phi$ of the contract. That argument assumed that $c$ satisfies 
the formula $c \entails (((A \signs c) \land (B \signs c)) \rimp \phi)$. We now show that the syntax and semantics for self-reference 
developed above enables us to display a particular contract $c$ for which this formula is indeed a \emph{validity} of the logic. 
For the remainder of this section, let $c$ be the formula $$\this x.(((A \signs x) \land (B \signs x)) \rimp \phi)~.$$ 
Intuitively, this expresses ``This contract may be signed in counterparts" as 
``This formula, if signed by both $A$ and $B$, implies that $\phi$ holds'' where $\phi$ expresses the 
logical content of the contract.

\begin{proposition}  $\vdash c \entails (((A \signs c) \land (B \signs c)) \rimp \phi)$.  
\end{proposition}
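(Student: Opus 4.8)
The plan is to unfold the self-referential definition of $c$ using axiom~\ref{ax:self}, and then observe that the resulting formula entails itself via axiom~\ref{ax:fomulaentails}. Concretely, recall $c$ is the formula $\this x.(((A \signs x) \land (B \signs x)) \rimp \phi)$. Let $\psi$ abbreviate the body $((A \signs x) \land (B \signs x)) \rimp \phi$, so $c = \this x.\psi$. I would first compute the substitution $\psi[x \mapsto c]$: since $x$ occurs only inside the subterms of $A \signs x$ and $B \signs x$, and $\phi$ contains no free occurrence of $x$ (indeed $\phi$ is a fixed sentence expressing the contract terms), we get $\psi[x \mapsto c] = ((A \signs c) \land (B \signs c)) \rimp \phi$. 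By axiom~\ref{ax:self}, $\vdash (\this x.\psi) \dimp \psi[x\mapsto \this x.\psi]$, i.e. $\vdash c \dimp (((A \signs c) \land (B \signs c)) \rimp \phi)$.

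Next I would invoke axiom~\ref{ax:fomulaentails}, which gives $\vdash c \entails c$ (here $c$, being a formula, counts as a term, and the axiom says a formula-as-term entails itself). Now I want to move from $c \entails c$ to $c \entails (((A \signs c) \land (B \signs c)) \rimp \phi)$ using the equivalence just derived. The tool for this is the normality of the ``$c\entails$'' modality: from $\vdash c \dimp \chi$ (where $\chi = ((A \signs c) \land (B \signs c)) \rimp \phi$) we get $\vdash c \rimp \chi$ by propositional reasoning (\ref{ax:prop} and~\ref{r:MP}), hence $\vdash c \entails (c \rimp \chi)$ by rule~\ref{r:Nentails}, and then combining with $\vdash c \entails c$ via axiom~\ref{ax:entailsclosure} and~\ref{r:MP} yields $\vdash c \entails \chi$, which is exactly the claim.

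I expect the main (very mild) obstacle to be purely bookkeeping: making sure the substitution $\psi[x \mapsto c]$ is computed correctly according to the substitution definition given in Section~\ref{sec:self-ref}, in particular that $\phi$ is untouched (it has no free $x$) and that the two occurrences of $x$ inside $A \signs x$ and $B \signs x$ are genuinely free (they are not inside any $\this x.\,$ binder other than the outermost one being eliminated). Once that is pinned down, everything else is a routine chain through axioms~\ref{ax:self}, \ref{ax:prop}, \ref{ax:fomulaentails}, \ref{ax:entailsclosure} and rules~\ref{r:MP}, \ref{r:Nentails}. Alternatively, one could argue semantically using Proposition~\ref{prop:sound}'s soundness direction in reverse is not available, so the syntactic derivation above is the cleaner route; but a semantic sanity check — that in every model, every world consistent with $c$'s entailments satisfies $c$, hence satisfies $((A \signs c)\land(B \signs c))\rimp\phi$ by the $\this$-clause — confirms the result is correct.
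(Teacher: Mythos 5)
Your proposal is correct and follows essentially the same route as the paper's own proof: instantiate Axiom~\ref{ax:self} to get $c \dimp (((A \signs c) \land (B \signs c)) \rimp \phi)$, obtain $c \entails c$ from Axiom~\ref{ax:fomulaentails}, and close under $c\entails$ via rule~\ref{r:Nentails} and Axiom~\ref{ax:entailsclosure}. Your version merely spells out the intermediate step $\vdash c \entails (c \rimp \chi)$ that the paper leaves implicit.
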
 

\begin{proof} 
We have the following instance of axiom~\ref{ax:self}: 
$$ c \dimp (((A \signs c) \land (B \signs c)) \rimp \phi)~.$$
By~\ref{ax:fomulaentails}, we have that  $c \entails c$. 
Hence, using~\ref{ax:entailsclosure}, and~\ref{r:MP}, we derive
$c \entails (((A \signs c) \land (B \signs c)) \rimp \phi)$. 
\end{proof} 

It follows using the argument of Section~\ref{cpt1} 
that we can use the particular formula $c$ to implement signature by counterpart of a contract with logical content $\phi$.

\begin{proposition} $\vdash ((A\signs c) \land (B\signs c)) \rimp ((A \says \phi) \land (B \says \phi))$. 
\end{proposition}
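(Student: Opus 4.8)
The statement is exactly the instantiation of the abstract Proposition from Section~\ref{cpt1} to the concrete self-referential contract $c = \this x.(((A \signs x) \land (B \signs x)) \rimp \phi)$. So the plan is to combine the two results just established: the preceding Proposition gives $\vdash c \entails (((A \signs c) \land (B \signs c)) \rimp \phi)$ as a validity (no hypotheses), and the abstract Proposition of Section~\ref{cpt1} gives
$$\vdash ((c \entails (((A \signs c) \land (B \signs c)) \rimp \phi)) \land (A \signs c) \land (B \signs c)) \rimp ((A\says \phi) \land (B\says \phi))~.$$
From these two, the desired $\vdash ((A\signs c) \land (B\signs c)) \rimp ((A \says \phi) \land (B \says \phi))$ follows by pure propositional reasoning (\ref{ax:prop}) and~\ref{r:MP}: discharge the already-derivable conjunct $c \entails (((A \signs c) \land (B \signs c)) \rimp \phi)$ from the antecedent of the abstract Proposition.

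**Key steps, in order.** First, invoke the preceding Proposition to record $\vdash c \entails (((A \signs c) \land (B \signs c)) \rimp \phi)$; call this entailment $\eta$. Second, invoke the abstract Proposition of Section~\ref{cpt1} with this particular $c$ and $\phi$, obtaining $\vdash (\eta \land (A \signs c) \land (B \signs c)) \rimp ((A\says \phi) \land (B\says \phi))$. Third, using a propositional tautology of the shape $(\eta \rimp ((\eta \land \alpha) \rimp \beta) \rimp (\alpha \rimp \beta))$ together with~\ref{r:MP} applied twice, conclude $\vdash ((A \signs c) \land (B \signs c)) \rimp ((A\says \phi) \land (B\says \phi))$. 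Alternatively, and perhaps cleaner, one can simply re-run the unfolded derivation directly: from~\ref{ax:self} we have $c \dimp (((A \signs c)\land(B \signs c)) \rimp \phi)$; assuming $A \signs c$, axiom~\ref{ax:signssays} applied with term $c$ and this biconditional-as-entailment (via~\ref{ax:fomulaentails} and~\ref{ax:entailsclosure}) yields $A \says (((A \signs c)\land(B \signs c)) \rimp \phi)$; \ref{ax:liftsigns} applied to $A \signs c$ and $B \signs c$ gives $A \says (A \signs c)$ and $A \says (B \signs c)$; two applications of normality~\ref{ax:saysnormal} then give $A \says \phi$, and symmetrically $B \says \phi$.

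**Main obstacle.** There is essentially no obstacle: the hard work has already been done in the abstract Proposition of Section~\ref{cpt1} (the derivation through~\ref{ax:signssays}, \ref{ax:liftsigns}, and normality) and in the immediately preceding Proposition (unfolding the self-reference via~\ref{ax:self} and pushing it through~\ref{ax:entailsclosure}). The only thing to be careful about is the bookkeeping of the propositional step that discharges the now-provable entailment hypothesis $\eta$ from the antecedent of the abstract Proposition — this is routine~\ref{ax:prop} plus~\ref{r:MP}, and I would state it in one line rather than belabour it.
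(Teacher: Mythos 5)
Your proposal is correct and matches the paper's own (implicit) proof: the paper simply notes that the result ``follows using the argument of Section~\ref{cpt1}'', i.e., it instantiates the abstract proposition of that section with the entailment $\vdash c \entails (((A \signs c) \land (B \signs c)) \rimp \phi)$ established in the immediately preceding proposition, discharging that conjunct propositionally exactly as you describe. Your unfolded alternative derivation is likewise the same chain of \ref{ax:self}, \ref{ax:fomulaentails}, \ref{ax:entailsclosure}, \ref{ax:signssays}, \ref{ax:liftsigns} and \ref{ax:saysnormal} that the paper uses in Section~\ref{cpt1}.
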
 

Thus, we have the concrete self-referential formula $c$ as one example that supports signature in counterparts in our logic. 
Other examples are easily generated. For example, it is clear that for contracts involving a larger number of parties $A_1, \ldots, A_n$, 
for the formula $m$ defined as 
$$\this x.\left(\left(\bigwedge_{i=1\ldots n} A_i \signs x\right) \rimp \phi\right)$$
we have 
$$\vdash  \left(\bigwedge_{i=1\ldots n} A_i \signs m \right)\rimp \left(\bigwedge_{i=1\ldots n} A_i \says \phi \right)~.$$ 

\section{Common Assent} \label{sec:common} 

We now consider the intuitive interpretation of our operators and present some additional consequences of the semantics that follow from condition SC3. 
As we have noted, this condition underpins axiom~\ref{ax:liftsigns}, 
which states that 
$$  (B \signs t) \rimp A \says (B\signs t) $$
for all agents $A,B$. 

Whether this axiom is desirable is application dependent, and we do not propose that the semantics considered in the present paper is adequate for all applications. 
It is part of our intended interpretation of $B\signs t$ that agent $B$ has placed their  
non-repudiable and publicly verifiable (cryptographic) signature on the content $t$. This means that, presented with the signed content, 
no agent can reasonably dispute that $B \signs t$. 
However, it might be objected that $B\signs t$ does not imply that agent $A$ \emph{knows} that $B$ has signed $t$, since 
$A$ may not have seen the signed content.  We have not modeled knowledge in the logic, but if what $A$ assents to is based on 
$A$'s incomplete view of the world, then  there may well be true statements concerning what other agents have in fact signed about which $A$ is agnostic. 
In this event, validity of axiom~\ref{ax:liftsigns} would be stronger than is desirable. 
 
A stronger argument for the axiom can be made on the assumption that all signed statements 
are available to all agents. 
The intuitive motivation given earlier in the paper, that the semantics can be understood as modeling a scenario where 
all the cryptographic evidence is available, e.g., as in a court proceeding, 
supports this assumption.  

Another scenario that supports the axiom is a setting where a central trusted agent, such as a law firm or official registry, collects and stores all signed statements, 
and provides any such evidence to an agent upon request. Indeed, signature of contracts in counterparts often makes use of law firms for this purpose (see the discussion section below). 
With this assumption, a reading of $A \says \phi$ as  ``agent $A$ \emph{would} say $\phi$  once all the evidence has been obtained''  would support axiom~\ref{ax:liftsigns}.

A more secure way to realize such a scenario, particularly if there are questions about the trustworthiness or reliability of a third party, 
would be to eliminate use of a third party by using a blockchain to record the signed statements. (Blockchains use a variety of byzantine consensus protocols to implement an immutable ledger \cite{Nakamoto2008,AndroulakiCCSV17}.)  
In such an application $A \signs t$ could be taken to have the semantics that not only has $A$ cryptographically signed $t$, but that the signed copy of $t$ has been recorded on the 
blockchain. Similarly $A \says \phi$ can be interpreted as meaning that cryptographic evidence entailing that $A$ assented to $\phi$ is present on the blockchain.%
\footnote{There are subtleties about finality  and the stability of the record that depend on the details of the consensus protocol in use by the blockchain. Some blockchains
have the property that facts may be unstable, though only with negligible probability. For our purposes here we treat this as equivalent to actual stability for practical purposes.} 
In such an interpretation, there is a strong case for the validity of axiom~\ref{ax:liftsigns}, since a secure public record is available to all agents. 

Assuming axiom~\ref{ax:liftsigns}, we can derive some further conclusions. 
When $G$ is a group (set) of agents, write $G \says \phi$ for the conjunction $\bigwedge_{A\in G} A \says \phi$, 
and inductively define $G \says^k \phi$, where $k\geq 1$ is a natural number, by 
 $G \says^1 \phi = G \says \phi$ and 
 $G \says^{k+1} \phi = G \says (G \says^k  \phi)$. Define the semantics of $G \says^\omega \phi$ by 
 \begin{itemize}
 \item $M,w \models G \says^\omega \phi$ if $M,w \models G \says^k\phi$ for all natural numbers $k \geq 1$.  
 \end{itemize} 

Intuitively, $G \says^\omega \phi$ states that the group $G$ is in mutual agreement concerning $\phi$. Not only does everyone in the group 
agree to $\phi$ (since $G \says \phi$), but everyone agrees that everyone agrees, i.e., $G \says (G \says \phi)$, and they furthermore agree 
that everyone agrees that everyone agrees, i.e., $G \says^3 \phi$, and so on. This notion is very similar to the well-know notion of common knowledge
from the literature on epistemic logic \cite{FHMVbook} with the exception that we do not have $(G \says \phi) \rimp \phi$ valid. 
As the following result shows, it is a normal operator satisfying an induction condition. 

\begin{proposition} \label{prop:saysomega}
The operator $G \says^\omega \phi$ satisfies the following for all models $M$: 
\begin{enumerate} 
\item if $M \models \phi$ then $M \models G\says^\omega \phi$,
\item $M \models ((G\says^\omega \phi)\land G\says^\omega (\phi\rimp \psi)) \rimp G \says^\omega \psi$,
\item  if  $M \models \phi \rimp G \says (\phi \land \psi)$ then 
$M \models \phi \rimp G \says^\omega \psi$,
\item $M \models G \says^\omega \phi \dimp G \says (\phi \land G\says^\omega \phi)$.
\end{enumerate}
\end{proposition}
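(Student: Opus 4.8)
The plan is to establish each of the four parts by reducing to properties of the finite iterates $G \says^k$, which in turn follow from the fact that each $A \says$ is a normal modal operator (axiom~\ref{ax:saysnormal}, rule~\ref{r:Nsays}) together with axiom~\ref{ax:liftsigns} only being needed for the intuitive motivation, not the formal claims. First I would record the elementary facts that for each fixed $k$, the operator $G \says^k$ is itself normal — i.e.\ $M \models \phi$ implies $M \models G \says^k \phi$, and $M \models ((G\says^k \phi) \land G\says^k(\phi\rimp\psi)) \rimp G\says^k\psi$ — which is a routine induction on $k$ using normality of each $A\says$ and the fact that a finite conjunction of normal operators (i.e.\ $G\says$) is normal. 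Part~1 is then immediate: if $M\models\phi$ then $M\models G\says^k\phi$ for every $k\geq 1$, hence $M\models G\says^\omega\phi$ by definition. Part~2 follows by applying the normality of $G\says^k$ at each level $k$ and then taking the conjunction over all $k$.

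For part~4, the direction $G\says^\omega\phi \rimp G\says(\phi \land G\says^\omega\phi)$: assume $M,w\models G\says^\omega\phi$, so $M,w\models G\says^k\phi$ for all $k\geq1$; in particular $M,w\models G\says\phi$, i.e.\ $M,w\models G\says^1\phi$, giving the $\phi$ conjunct once we push it inside, and $M,w\models G\says^{k+1}\phi = G\says(G\says^k\phi)$ for all $k$. The point is that $G\says(\phi \land G\says^\omega\phi)$ at $w$ means every world $w'$ reachable by a single $A\says$-step (for each $A\in G$) satisfies $\phi\land G\says^\omega\phi$, and $M,w'\models G\says^k\phi$ for all $k$ precisely because $M,w\models G\says^{k+1}\phi$. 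Here I would be slightly careful about the semantics of $G\says$ as a conjunction over agents versus the accessibility picture, and state the reachability fact cleanly: $M,w\models G\says^k\phi$ iff $M,w'\models\phi$ for every $w'$ obtained from $w$ by a chain of at most $k$ steps along relations $\Rsays(\cdot,A,\cdot)$ with all agents in $G$. The converse direction of part~4, that $G\says(\phi\land G\says^\omega\phi)$ implies $G\says^\omega\phi$, is proved by checking $G\says^k\phi$ for each $k$: the case $k=1$ uses the $\phi$ conjunct (via normality, since $G\says$ is monotone), and $k\geq2$ uses the $G\says^\omega\phi$ conjunct together with $G\says^{k}\phi = G\says(G\says^{k-1}\phi)$.

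For part~3, the induction principle: assume $M\models \phi\rimp G\says(\phi\land\psi)$. I would show by induction on $k\geq1$ that $M\models \phi \rimp G\says^k\psi$. For $k=1$: from $\phi$ we get $G\says(\phi\land\psi)$, hence $G\says\psi$ by monotonicity. For the step, suppose $M\models\phi\rimp G\says^k\psi$; I want $M\models\phi\rimp G\says^{k+1}\psi = G\says(G\says^k\psi)$. From $\phi$ we have $G\says(\phi\land\psi)$, and I would like to conclude $G\says(G\says^k\psi)$. The natural route is to first strengthen the hypothesis to $M\models\phi\rimp G\says(\phi\land\psi\land G\says^k\psi)$ is not quite available directly; instead, observe $M\models\phi\rimp G\says(\phi\land\psi)$ gives $M\models\phi\rimp G\says\phi$, and applying $G\says$ to the induction hypothesis $\phi\rimp G\says^k\psi$ (legitimate by rule~\ref{r:Nsays}/normality of $G\says$ at the semantic level: $M\models\alpha$ gives $M\models G\says\alpha$) yields $M\models G\says\phi\rimp G\says(G\says^k\psi)$; chaining with $\phi\rimp G\says\phi$ gives $M\models\phi\rimp G\says^{k+1}\psi$. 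Taking the conjunction over all $k$ gives $M\models\phi\rimp G\says^\omega\psi$. The main obstacle I anticipate is purely bookkeeping: being careful that all the "normality" and "necessitation" facts for the derived operator $G\says$ (and its iterates) are justified semantically on models rather than syntactically, since the claims are stated as $M\models\cdots$ for arbitrary $M$; once the lemma that each $G\says^k$ is a normal operator on every model is cleanly stated, all four parts are short.
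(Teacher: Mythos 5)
Your proposal is correct and follows essentially the same route as the paper: parts (1) and (2) by induction from normality of each $A\says$, part (3) by induction on $k$ using semantic necessitation and K-distribution for the derived operator $G\says$ (you carry $\psi$ through the induction where the paper carries $\phi\land\psi$ and then invokes (2) --- an immaterial difference), and part (4) directly from the semantics. One small inaccuracy: your parenthetical gloss of $G\says^k\phi$ as quantifying over chains of \emph{at most} $k$ steps should read \emph{exactly} $k$ steps, since the relations $\Rsays$ are not assumed reflexive; this does not affect your actual derivations, which never rely on that characterization.
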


This result justifies the following axiom and rules of inference for $G\says^\omega \phi$:
\begin{enumerate}[label=\textbf{Ax\arabic*}]
 \setcounter{enumi}{8}
\item \label{ax:omegafix} $ (G\says^\omega \phi)\dimp  G\says ( \phi \land G \says^\omega \psi)$
\end{enumerate} 
\begin{enumerate}[label=\textbf{R\arabic*}]
 \setcounter{enumi}{3}
\item \label{r:Nomega} $\vdash  \phi$ implies $\vdash G \says^\omega \phi$. 
\item \label{r:induction-omega} 
$\vdash \phi \rimp G \says (\phi \land \psi)$ implies  
$\vdash \phi \rimp G \says^\omega \psi$.
\end{enumerate} 
We note that we derive from the above that 
$$\vdash (G\says^\omega \phi)\land G\says^\omega (\phi\rimp \psi)) \rimp G \says^\omega \psi$$
so that the operator $G \says^\omega$ is normal. 

Using these axioms and rules, we can derive a stronger statement about the effect of 
signing a contract. Let $c$ be the contract from Section~\ref{cpt2}.
The conclusion of our characterization of signature in counterparts was that 
$$\vdash (A \signs c \land B \signs c) \rimp \{A,B\} \says \phi$$ 
where $\phi$ expressess the terms of the contract.  

We derive using axiom~\ref{ax:liftsigns} that 
$$\vdash (A \signs c \land B \signs c) \rimp \{A,B\} \says  (A \signs c \land B \signs c \land \phi)~. $$ 
Using \ref{r:Nomega}, we get that 
$\vdash (A \signs c \land B \signs c) \rimp \{A,B\} \says^\omega \phi$.

That is, it follows from the fact that both $A$ and $B$ have signed the contract not just that $\{A,B\} \says \phi$
(both assent to the terms of the contract), but that $\{A,B\} \says^\omega \phi$, i.e., the parties are in mutual agreement about the content of the 
contract: they are also agreed that they are agreed, they agree that that they agree that they agree, etc. 
Certainly this is a desirable conclusion - problems could arise if were to accept that it is possible that the parties 
have agreed to the contract but they are in disagreement about whether they have agreed - one could  envisage one of the agents litigating on the question of whether a valid contract 
has in fact been formed, in order to escape the contract. The desirability of mutual agreement may in fact underlie the historical process of gathering all parties in a single location for a signature
ceremony, since such a setting, with all parties observing each other signing the contract, establishes common knowledge concerning the parties agreement to the contract. 

Indeed, we can draw a further conclusion. 
Using axiom~\ref{ax:liftsigns} we can obtain that 
$$\vdash (A \signs c \land B \signs c) \rimp H \says  (A \signs c \land B \signs c) $$
for every group $H$ of agents. 
Using \ref{r:Nomega}, we get that 
$$\vdash (A \signs c \land B \signs c) \rimp H \says^\omega  (A \signs c \land B \signs c)$$ 
and consequently that 
$$\vdash (A \signs c \land B \signs c) \rimp H \says^\omega ( \{A,B\} \says^\omega \phi)~.$$ 
That is, if $A$ and $B$ have signed, then not just they,  
but in fact all of society is in mutual agreement that $A$ and $B$ have assented to the terms of the contract.  
This again could be considered desirable, from the point of view of societal enforcement of contracts. 
Under an interpretation of $A\signs c$ as implying that the signature has been logged on a public blockchain, 
this conclusion is consistent with the conception of the blockchain as representing the consensus of all participating agents. 

On the other hand, it is reasonable that agents might be entitled to privacy concerning their contracts unless these come into dispute, 
which would argue against the reasonableness of this conclusion. However, if we  interpret $A \says \phi$ conditionally, as asserting that $A$ would agree to $\phi$ \emph{were} $A$ to be 
be presented with all the relevant (cryptographic) signature evidence in existence, then  we do not have that $A \says \phi$ implies $A$ knows that $A \says \phi$, and the conclusion is 
more reasonable.  We expect this intuition could be formalized by adding conditional, temporal and/or epistemic expressiveness to the framework, but leave this for future work. 

We remark that a similar argument to the above yields from axiom~\ref{ax:liftentails} that for all groups $H$ we have 
$$ (t\entails \phi) \rimp H \says^\omega (t \entails \phi) $$
Intuitively, this states that all agents in $H$ mutually agree to the entailments of a term $t$. This is exactly as we would expect, on the 
assumption that all agents ``speak the same language'' which moreover is common knowledge.

\section{Imperative Smart Contracts} \label{sec:imperativesc}

Our discussion above has been motivated by a view of smart legal contracts in which parties enter into relationships recognized 
by the legal system by signing declarative content expressed in a logic. 
In this section we consider how the logic of the present paper might relate to an imperative smart contracts. We first review how existing smart contract approaches relate to the law, 
and then discuss how the logic of the present paper might be used to model such approaches. 

\subsection{Current Smart Contract Approaches} 

Present smart contract platforms are generally built for a \emph{Smart Contract  Absolutist} or ``Code is Law" view that does not 
recognize legal jurisdiction, and uses imperative code rather than declarative representations of the  relationships enforced between the parties. 
While smart contract platforms such as Ethereum \cite{ethereum} use digitally signed messages, 
agreement to  contract terms is implicit. In effect, an offeror makes an offer by signing a transaction that registers code on to the blockchain. 
Other parties accept to engage on the terms enforced by this code not by means of messages that explicitly agree to these terms,
but simply by sending a signed transaction that calls a function of the on-chain code. Typically this ``accepting'' transaction also  transfers 
some asset (cryptocurrency or token) from control of the acceptor to the control of the on-chain code. It is this grant of control rather than an explicit agreement that
commits the acceptor to the terms of the contract. 

One step closer to our view is Digital Asset's DAML  smart contract language \cite{daml}. This language was developed to support smart contracts that 
do carry legal recognition, and are required to be legally compliant, for
applications such as equity rights representations on the digital ledger based clearance and settlement system under development for the Australian Stock Exchange \cite{asxchess}. 
Like Ethereum contracts, DAML contracts are code, but it is intended that this 
code may represent rights and obligations enforced either on-chain or in the real world. (Off-chain obligations are expressed in on-chain code simply as text fields.) 
Each DAML smart contract has ``signatories'', a set of parties. All signatories need to authorize a DAML contract before it can be can be registered on the blockchain. 
The authorizers of an action (a function call on a smart contract) are the agent(s) calling the action and the signatories of the contract on which it is called. 
To create a contract, all its signatories must be in the set of authorizers of the creation action. This view enables a two-party offer and acceptance process
in which the offeror first lodges  on-chain an offer contract for which it is the sole signatory, on which the acceptor calls an acceptance function that has the effect of creating the 
actual two-party contract. For $n$-party contracts, the DAML manual recommends a process using a sequence of on-chain contracts, starting with an initial contract 
with a single signatory, and adding the remaining signatories one-by-one through function calls that create the next contract in the chain. The call made by the final party creates the 
intended contract on chain, with all $n$ parties as signatories. This process is structured somewhat like the linear $n$-party offer-acceptance process discussed in Section~\ref{sec:offer}, but with the 
intended terms represented implicitly in the code.

A disadvantage of imperative code-based smart contracts on a Smart Contract  Absolutist view is that the possible behaviors of code may 
be difficult to understand, even for its original developers. Divergences from the expected behaviors 
sometimes have serious consequences, e.g., in case of malicious attacks exploiting design flaws \cite{DAO-NYT}, and leave the parties without legal recourse. 

Riccardian Contracts \cite{Grigg04} have been proposed as an approach to dealing with comprehensibility and the lack of legal recourse: the key idea is that parties to a contract digitally 
sign content that is expressed in a restricted form that can be interpreted both as a legal contract and processed by a machine. 
Due to this dual purpose, code in these contracts typically has a significantly more limited expressive power than the ``Turing-complete'' smart contracts on platforms like Ethereum. 
In some incarnations \cite{OpenBazaar}, Riccardian contracts are a set of attribute-value pairs. The attributes and values may be 
natural language words for readability, but their interpretation is external to the representation.
A Riccardian contract may also express (in natural language) the legal agreement between the parties as to how code will be interpreted by them
from a legal perspective. 

\subsection{Application of the Logic to Imperative Smart Contracts} 

We now sketch some of the ways that an extension of the logic of the present paper might be used to give a logical account of 
the way that legal meaning might be ascribed to imperative smart contracts. Our brief presentation will necessarily be incomplete. Because 
of the dynamic nature of the blockchain state and the relevance of questions of timing and order, 
extensions of the logic to encompass quantification and temporal and deontic expressiveness 
would be needed to give a full account. Such extensions are beyond the scope of the present paper, and 
we leave their development to future work. 

One of the ways that legal meaning might be associated to an imperative smart contract is simply to have the parties to the smart contract sign a legal contract 
(separately and distinct from the smart contract) that concerns the legal interpretation and consequences of the blockchain state of the smart contract 
and the messages signed by the parties. 
Such contracts would be similar to the ``trading partner agreements" that have long been used in Electronic Data Interchange (EDI) systems \cite{BaumPerritt}. 
For contracts of this nature, when their content is formalized as a formula $\phi$, either the  formalization of the offer and acceptance process from Section~\ref{sec:offer} or the 
signature in counterparts process from Section~\ref{cpt2} could be used to capture the way that the parties come to mutual assent of this legal contract. 
The only novelty is that the content of $\phi$ relates to the interpretation of a distinct smart contract.

Alternately, the legal system itself could choose to accept smart contracts and signed blockchain transaction messages as legally meaningful, and lay out a standardized legal interpretation. Such a move would be similar to the existing legislation that governs legal acceptance of electronic signatures
(e.g., the US Electronic Signatures in Global and National Commerce Act, and the EU eIDAS (electronic IDentification, Authentication and trust Services) regulation).    
We sketch one way that such an interpretation could be established, using the entailment operator $\entails$ to express the meaning 
associated to a signed transaction. 

We make a number of assumptions about the smart contract platform and the interpretation of some of the primitives of the logic: 
\begin{itemize} 
\item A smart contract $c$ is created by a party $A$ by signing a message of the form $\create(c)$. The statement $A\signs \create(c)$ is interpreted to be 
true if $A$ has cryptographically signed the message  $\create(c)$, and this cryptographically signed message has been processed by the miners, 
so that the contract $c$ has been registered on the blockchain.  
We assume that $c$ determines a description of a unique ``address" where the contract resides.%
\footnote{Something close to this holds in Ethereum, where the address for a contract creation is determined from the sender's address and a nonce that is included in the signed transaction.
A precise modelling of Ethereum would require that a function call to a created contract be directed to this address rather than to $c$. This would not be difficult, but we 
avoid this complication for brevity.  
} 
 A consequence of this assumption is that for a given $c$, 
there is at most one agent $A$ for which $A \signs \create(c)$ holds, since miners would have rejected any attempt by another agent to create a contract at the same address
where one already resides.  

\item A function call on a smart contract $c$ that exists on the blockchain is performed by sending a message of the form $\call(c,f)$ to the miners, 
where, as just noted, we interpret $c$ as including the address of the contract. 
Here $f$ specifies the function being called, 
as well as its arguments, any cryptocurrency value attached to the call, and any other information such as a nonce. 
We also assume that $A \signs \call(c,f)$ is interpreted to mean that $A$ has signed this message, and it has been processed by the miners and its effects
reflected in the blockchain state.  

\item In addition to the above, the parties may sign messages of the form $\phi$, where $\phi$ is a formula. For such messages, we do not necessarily require that
$A \signs \phi$ implies that the signature has been registered on the blockchain. Agents may sign such messages and transmit them privately to others that they 
are interacting with, or publish such a  signed message on a website. 

\item The relation $\entails$ expresses the meaning that is associated to signed messages relating to smart contracts within a particular jurisdiction. 
In general,  given the international scope of open blockchain systems, there would be more than one potential jurisdiction, which may vary in the interpretation
of smart contracts. To reason about such situations would require an extension of the present logic that relativizes the relation $\entails$ to a jurisdiction. This would be 
of interest for reasoning about cross-jurisdictional issues, but we leave this as an issue for future work. 

\end{itemize} 

Based on these assumptions, we can give a formalization of how two agents $A$ and $B$ can be interpreted as entering into a meeting of the minds when 
$A$ creates a smart contract $c$ and $B$ participates in this smart contract by sending it a signed transaction, making function call $f$ on this smart contract. 
Consider the formula schema
\begin{equation} 
\begin{array}{l}
(A \signs \create(c) \land A \signs \psi(c)) \rimp \psi(c) 
\end{array} \label{eq:smartcontract} 
\end{equation} 
This formula states the general principle that if $A$ has created $c$, and 
$A$ has signed a formula $\psi(c)$ expressing the legal interpretation of $c$, 
then that legal interpretation holds. That is, the formula expresses that the statements 
signed by the creator of the smart contract $c$ about the meaning of 
the smart contract determine the meaning of the smart contract. (In general, this is a very strong statement, and 
we would want to temper its force by placing restrictions on the formula $\psi(c)$, such as that $\psi(c)$ does not 
concern matters that are ``unrelated'' to $A$. We will see, however, that we use this formula only in the scope of 
operators ``$i \says$'', so that agents have a choice of whether or not to agree to the principle, given the 
particular formulas $\psi(c)$ that $A$ has signed.) 

For example, suppose that $A$ is Acme Co and $c$ is a smart contract that records share ownership in Acme Co. 
Here $\psi(c)$ might contain statements about the legal interpretation of aspects of the state of contract $c$, 
such as 
\begin{quote} 
$c.\mathit{shares}[B]=n$ $\rimp$ $B$ owns $n$ shares of Acme Co
\end{quote} 
where $\mathit{shares}$ is a variable of the smart contract $c$ of type mapping, that records an integer number of shares for each agent $B$. 
Contract calls might also be interpreted by including in $\psi(c)$ statements of the form 
$$
(B \signs \call(c,f)) \rimp \gamma(A,B,c,f)
$$
that associate a particular legal interpretation $\gamma(A,B,c,f)$ with the contract call $f$. 
For example, if  the function call $f$ 
is ``{\tt buy\{value:10~ether\}(20)}" then $\gamma(B,c,f)$ may express a legally meaningful assertion such as 
``immediately after completion of the function call, the caller $B$ owns 20 additional shares of Acme Co, and 10 ether has been transferred from B to Acme Co". 
Note that these formulas link the state of the blockchain to the domain of legal ownership within the governing jurisdiction: 
blockchain data is being \emph{interpreted} as having a particular meaning in the legal world.  

Write $\Phi(c, \psi(c))$ for formula~(\ref{eq:smartcontract}). Suppose now that the entailment relation has been defined
so that 
$$ 
\call(c,f) \entails \Phi(c,\psi(c)) ~.$$
for all $c,f$ and $\psi(c)$.
That is, the smart contract call message $\call(c,f)$  entails
the principle $\Phi(c,\psi(c))$. Intuitively, this means that both a caller of the smart contract assents
to the principle $\Phi(c,\psi(c))$ in signing the contract call. 

We claim that the above definition of the meaning of the smart contract call implies, according to our logic, 
that if $A \signs \create(c)$ and $A \signs \psi(c)$ and $B \signs \call(c,f)$, 
then $A\says \psi(c)$ and $B\says\psi(c)$. That is, it is a consequence of the above assumptions that both the creator $A$ and the 
caller $B$ of the smart contract assent to the consequences of the conditions $\psi(c)$ that 
$A$ has asserted about the meaning of participation in the smart contract.

That $A \says \psi(c)$ is immediate from $A \signs \psi(c)$ using Axiom~\ref{ax:signssays}. 
It follows from the facts about signatures, Axiom~\ref{ax:liftsigns} and normality of $\says$ that 
$$B \says ((A \signs \create(c)) \land A \signs \psi(c))$$ 
(in fact, this holds for all agents, not just $B$). 
From $B \signs \call(c,f)$ and $\call(c,f) \entails \Phi(c,\psi(c))$  we get $B \says \Phi(c,\psi(c))$ using Axiom~\ref{ax:signssays}. 
It now follows using normality of $\says$ that $B\says\psi(c)$.
Hence we have both $A \says \psi(c)$  and $B\says\psi(c)$, as claimed. 

There is one issue with the above approach, which is that $A$ may sign multiple formulas
stating alternate interpretations of the smart contract. A smart contract participant 
$B$ is at risk that $A$ will, in the event of a legal dispute concerning the smart contract $c$, present a signed statement 
$A \signs \psi'(c)$ in court when $B$ had participated in the smart contract, signing $\call(c,f)$ 
on the understanding that it would be interpreted using  $A \signs \psi(c)$. 

Various approaches might be used to give $B$ assurance as to which legal interpretation will apply. One is 
to require that $A$'s signature on $\psi(c)$ be registered on the blockchain, and to state clear rules 
for which of these statements applies in case of conflicting alternatives.  
Alternately, we could preempt the potential for conflicts by  requiring the smart contract creation call to have the form
$\create(c,\psi(c))$, where the legal interpretation is registered together with the smart contract at the time of creation. 
The principle~(\ref{eq:smartcontract}) would then be stated as 
$$(A \signs \create(c,\psi(c)) \rimp \psi(c)~.$$
Alternately $\psi(c)$ could be included as part of the text of the contract $c$ itself. 
This would be similar to the approach taken in Riccardian contracts.

\section{Related Work} \label{sec:related} 

We have focussed exclusively on reasoning about contract signatures. A fuller treatment of the meaning of the logical content of contracts requires a richer language with 
additional expressiveness covering time, actions, and deontic notions. This will require a correspondingly richer semantics than that of the present paper. 
There exist works on logical representation of contracts that attempt to support such a richer expressiveness, e.g, \cite{Kimbrough,daskalopulu-thesis}, 
but the signature process does not appear to have been considered. 

The operator $A \says \phi$ is similar to the operator $A \asays  \phi$ from access control and authentication logics, which have been surveyed  by Abadi \cite{Abadi08}. 
However, these logics generally do not have our distinction 
(critical to keeping our semantics of self-reference simple) between $A \says \phi$ and $ A \signs \phi$, where in the 
latter $\phi$ is treated as a syntactic term rather than as a proposition. 
(Exceptions include \cite{HalpernM01}, and, in the different context of logics for 
electronic commerce messaging, a body of work by Kimbrough \cite{Kimbrough} and others that uses  ``disquotation''
of a syntactic representation of messages, that can be understood as following a ``syntactic substitution'' treatment of 
modalities.)  

The need for an axiom 
$ B \asays  \phi \rimp A \asays  ( B \asays  \phi) $
similar to \ref{ax:liftsigns} is generally accepted in  access control and authentication logics.  
Indeed some logics in this class accept the much stronger axiom 
$ \phi \rimp A \asays  \phi$
although the basis of the logic in this case is generally taken to be intuitionistic, to avoid some undesirable consequences in a classical setting.  
The motivation for such axioms in the 
context of access control logic is to  obtain validities such as 
$$ (A \asays  ((B\asays  \phi) \rimp \phi)) \land (B \asays  \phi) \rimp  A \asays   \phi $$ 
which enables $A$ to delegate to $B$ the ability to ``speak for'' $A$ on $\phi$, 
by  $A \asays  ((B \asays  \phi) \rimp \phi)$. 
We note that this motivation is very similar to our account of offer and acceptance in Section~\ref{sec:offer}.

There exists a body of work in the cryptography literature on 
``contract signing protocols'' or ``fair exchange protocols''
 \cite{PagniaVG03,KremerMZ02}. 
A protocol is said to be \emph{fair} if it ensures that the parties receive fully signed copies of the 
contract atomically, i.e., neither party has a fully signed copy until it is guaranteed that the other will also 
obtain a copy. Some protocols also aim to be 
\emph{abuse-free} \cite{GarayJM99}, in the sense that neither party is ever in a position where they have not yet assented to the contract, 
but are able to prove to a third party that they unilaterally have the ability to produce a fully signed copy (enforcing assent of the other party). 
Some general impossibility theorems  imply that it is often not possible to 
achieve fairness without use of a trusted third party \cite{EY80}, but protocols may attempt to minimize the use of this
 third party in various ways, e.g., using them as a fallback in case one party attempts to cheat the other \cite{AsokanSW00}. 
Some recent work has sought to use blockchain as the basis for fair exchange protocols, effectively decentralizing the trusted third party. 
A general construction for fair computation in the setting of Bitcoin is given in \cite{BentovK14}. More specific protocols focussed on contract signing are developed in \cite{Ferrer-GomilaHI19,WangLLZX19,ZZYX20}. 
Some of these rely on a redefinition of signature that meets one of our proposed interpretations of $A \signs t$, 
e.g., the protocol in \cite{Ferrer-GomilaHI19} does not consider a message to be signed until it has been 
registered on the blockchain.  
 
The fair exchange problem is orthogonal to the issues we have addressed in the present paper.  We are concerned with the semantics of the individually signed messages, 
and, in effect, reason in the final state of a fair exchange, where these messages have been successfully exchanged.  We do not address the question of abuse-freedom: 
even if it does not constitute assent to the contract, 
agent $A$'s signature on the self-referential formula $c$ we have developed could very well be sufficient evidence 
for a third party of $A$'s willingness to engage in the contract. Moreover, we have assumed it is sufficient for validity of the contract simply that 
$A\signs c$ and $B \signs c$, without considering the issue of who possesses the cryptographic evidence. In Section~\ref{sec:common} 
we argued that the use of a trusted third party or blockchain best justifies some aspects of our semantics for $\says$. 
A potential topic for future research is the interaction between our semantic viewpoint on messages and fair exchange protocols: it may be possible to 
develop a declarative understanding of the intermediate messages in these protocols, in the spirit of attempts to give a declarative meaning to messages in cryptographic protocols 
such as authentication protocols \cite{BAN90}. A richer modeling incorporating temporal and epistemic dimensions 
would be appropriate for such a project, and, depending on the nature of message passing environment and blockchain protocol, the appropriate notion of common knowledge 
may well be a more complex form of fixed point \cite{HalpernP17}.

In the present paper, we have been primarily concerned with developing a logical understanding of processes for contract signature as it relates to a meeting of the minds. Beyond this issue, 
there are several concerns relating to the signature process that affect the legal standing of the contract. Contracts often need to be not just signed but also given official standing as  a 
contract by being `sealed' (a term that derives from the historical use of wax seals for this purpose - nowadays a signature may serve the same purpose). 
Legislation affecting particular types of contracts may place additional requirements, e.g., use of witnesses, and registration of the  contract with a registrar, 
who may impose particular physical forms on the contract, such as original signed copies or specific types and sizes of paper. 

A report \cite{signatures-report} by a group of major law firms has developed general principles  and three distinct protocols for remote signing  of financial documents. 
Generally, these require signers to print and sign a paper copy of the contract and/or signature page, but allows scanned copies of these to be returned. 
All the protocols assume a coordinating legal law firm, so they use a centralized trusted third party. Statements made by the signatories in the emails by which the scanned
copies are delivered address the questions of sealing and validity date of the contract. The report does not go into the general legal principles or security requirements underlying the design of these
protocols, or elucidate how the protocols meet the requirements for the particular types of contract for which they are recommended. It may be interesting to pursue these questions  in future work, 
using a formal methodology similar to that of the present paper.

\section{Conclusion} \label{sec:concl} 

We have argued in this paper for some particular syntactical interpretations of the meaning of signing a contract in various processes. 
We have given natural axioms for the logical operators used, and demonstrated that these axioms justify reasoning steps that show 
that these processes satisfy a criterion of `meeting of the minds'.  Our formal semantics in this paper has been constructed as a simple semantics that validates the axioms. 

We note that  questions about the right semantics  do not affect the main conclusions of the paper in Section~\ref{cpt1} and Section~\ref{cpt2}. We have established these 
conclusions proof theoretically, using only the minimal set of rules and axioms in Section~\ref{sec:logic} and Section~\ref{sec:self-ref}, 
so these conclusions should be acceptable to anyone who accepts 
the correctness of these rules and axioms. 

We have not attempted to prove a completeness result for our logic, but have merely developed a semantics that 
validates the axioms we have chosen to work with. This suffices to 
show that the logic is consistent, and was useful in Section~\ref{cpt1} to show that a particular entailment does not hold. 
We do not expect that there are inherent difficulties in proving a completeness result, but defer this to future work on a richer 
logic. 

For some applications, e.g., asynchronous message passing contexts where agents do not have access to a common source of truth about what has been signed, 
the conclusions of  Section~\ref{sec:common} may be considered to be too strong, and it may be desirable to move to a weaker 
semantics that drops semantic condition SC3 and the corresponding axiom~\ref{ax:liftsigns}. In such settings, it would be beneficial to introduce 
an operator that expresses that an agent ``has" a message.  We leave for future work the question of what, from the point of view of intuitive acceptability and the needs of applications, 
are the appropriate axioms beyond the ones we have used,  as well the question of what more liberal semantics 
supports the required axioms. 

A general issue for contracts is that the legal system, either through legislation or court rulings, may make determinations concerning the interpretation of a contract
that are at variance with its text. For contracts in paper form, court rulings can be accommodated by reversing actions, payment of  compensation or replacement of 
the contract by another. In the setting of immutable contracts and events on a blockchain, these types of accommodation may be more difficult. 
There is a recognised need for smart contracts representing legally meaningful content to 
be adaptable to legal rulings \cite{MarinoJ16}. How this issue relates to the logical view we have taken in this work we also leave as a topic for future research.

\bibliographystyle{alpha} 
\bibliography{counterparts}

\appendix 

\section*{Appendix}

\setcounter{proposition}{8}

In this appendix, we give the proofs omitted in the body of the paper.

\begin{oldtheorem}{prop:sound}
\begin{proposition} 
The axiom schemas~\ref{ax:prop}-\ref{ax:liftentails} 
and  rules of inference~\ref{r:MP}-\ref{r:Nsays}
are valid in models satisfying SC1-SC3.
\end{proposition}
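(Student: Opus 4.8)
The plan is a bog-standard soundness argument: fix an arbitrary model $M = \langle W, \Rsigns, \Rentails, \Rsays, \pi\rangle$ satisfying SC1--SC3 and check each axiom schema and each inference rule against the satisfaction clauses, invoking a semantic constraint exactly where one of the box-like operators has to transfer information across its accessibility relation. I would dispatch the purely modal cases first. \ref{ax:prop} falls out of the clauses for $\neg$ and $\land$, since at any fixed $w$ the assignment $\phi\mapsto(M,w\models\phi)$ is a Boolean valuation and the remaining connectives are abbreviations; \ref{ax:entailsclosure} and \ref{ax:saysnormal} are just the $K$-axioms for the operators $t\entails$ and $A\says$, each defined by universal quantification over a set of worlds, so the textbook normal-modal-logic argument applies with no appeal to SC1--SC3; and \ref{r:MP} is immediate from the clause for $\rimp$.

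Next come the three axioms that actually consume the semantic constraints, each by a one-line unwinding. For \ref{ax:fomulaentails}: the clause for $\entails$ makes $M,w\models\phi\entails\phi$ say exactly that $M,w'\models\phi$ for every $w'$ with $(\phi,w')\in\Rentails$, which is precisely SC1 applied to the term $\phi$ (a formula). For \ref{ax:signssays}: from $(w,A,t)\in\Rsigns$ and $M,w'\models\phi$ for all $w'$ with $(t,w')\in\Rentails$, I would take an arbitrary $w'$ with $(w,A,w')\in\Rsays$, use SC2 to obtain $(t,w')\in\Rentails$, and read off $M,w'\models\phi$, giving $M,w\models A\says\phi$. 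For \ref{ax:liftsigns}: from $(w,B,t)\in\Rsigns$, SC3 pushes the signing fact to every $w'$ with $(w,A,w')\in\Rsays$, which is exactly $M,w\models A\says(B\signs t)$.

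Finally, the one point deserving a moment's thought is that \ref{ax:liftentails} and the necessitation rules \ref{r:Nentails}, \ref{r:Nsays} need \emph{no} semantic constraint at all, for the structural reason that $\Rentails$ carries no world parameter: the truth value of $t\entails\phi$ at $w$ does not mention $w$, so $t\entails\phi$ is true at every world or at none, whence it propagates trivially under any modality --- this yields \ref{ax:liftentails}. For the necessitation rules I would flag that they must be read in the usual global sense (premise valid in $M$ implies conclusion valid in $M$), not as a pointwise implication at each world, since the pointwise reading is plainly false; under the global reading they are immediate, as the defining quantifications for $\entails$ and $\says$ range over subsets of $W$ over which a validity still holds. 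So there is really no obstacle here beyond getting the reading of the necessitation rules right; everything else is a short unwinding of definitions, and the argument is essentially the list above, one case per axiom and rule.
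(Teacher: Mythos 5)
Your proof is correct and follows essentially the same route as the paper's: Boolean and standard Kripke arguments for \ref{ax:prop}, \ref{ax:entailsclosure}, \ref{ax:saysnormal} and \ref{r:MP}; SC1, SC2, SC3 consumed exactly by \ref{ax:fomulaentails}, \ref{ax:signssays}, \ref{ax:liftsigns} respectively; and the world-independence of $\Rentails$ for \ref{ax:liftentails}. Your explicit remark that the necessitation rules \ref{r:Nentails} and \ref{r:Nsays} must be read globally (validity-preserving) rather than pointwise is a point the paper glosses over, and is a worthwhile clarification.
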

\end{oldtheorem}

\begin{proof} 
Axiom \ref{ax:prop} and rule \ref{r:MP} are immediate from the fact that the boolean operators in formulas have their usual semantics. 
Axiom \ref{ax:fomulaentails} is direct from SC1.
Axioms \ref{ax:entailsclosure}, \ref{ax:saysnormal} and rules \ref{r:Nentails} and \ref{r:Nsays} follow in the usual way from the fact that the 
operators $t \entails \phi$ and $A \says \phi$ have been given a standard Kripke semantics using relations $\Rentails$  and $\Rsays$.

Axiom \ref{ax:signssays} follows from SC2. 
For, suppose $M,w \models (A \signs t)\land (t \entails \phi)$. 
From $M,w \models (A \signs t)$ we have that $(w,A,t) \in \Rsigns$. 
Let $w'\in W$ be any world such that $(w,A,w')\in  \Rsays $. By SC2, we have  $(t,w') \in \Rentails $.
Thus, from $M,w \models (t \entails \phi)$, we get $M,w' \models \phi$. 
We have shown that for all $w'\in W$ with $(w,A,w')\in  \Rsays $, we have $M,w' \models \phi$. 
Thus, $M,w \models A \says \phi$. 

For axiom ~\ref{ax:liftsigns}, suppose that $M,w \models B \signs t$. Then $(w,B,t) \in \Rsigns$. 
Let $w'\in W $ be any world with $(w,A,w') \in \Rsays$. 
By SC3, we have $(w',B,t) \in \Rsigns$.
Thus, $M,w'\models B \signs t$ for all $w'\in W$ with $(w,A,w') \in \Rsays$, which is equivalent to $M,w \models A \says (B \signs t)$. 

For axiom ~\ref{ax:liftentails}, suppose that $M,w \models  t\entails \phi$. 
Then $M,w' \models \phi$ for all $w'\in W$ such that $(t,w') \in \Rentails$. 
Note that this condition is independent of $w$. 
This means that it holds not just for $w$, but for every world $w''\in W$. 
In particular, it holds at every world $w''$ such that  $(w,A,w'') \in \Rsays$. 
Thus, $M,w'' \models t\entails \phi$ for all $w''\in W$ with $(w,A,w'') \in \Rsays$, 
which is equivalent to $M,w \models A \says (t\entails \phi)$. 
\end{proof}

For Proposition~\ref{prop:extend}, we proceed as follows. 

Define the entailment depth $\edepth$ of a term, inductively by 
$$\begin{array}{rl}  
\edepth(t) &  = 0 \hfill \text{when $t\in \terms \setminus \formulas$}\\ 
\edepth(p) & = 1 \\
\edepth(\neg \phi) & =\edepth(\phi) \\ 
\edepth(\phi_1 \land \phi_2) & = max(\edepth(\phi_1), \edepth(\phi_2)) \\ 
\edepth(A \signs t) & = 1 \\  
\edepth(t\entails \phi) & = max(\edepth(t), \edepth(\phi)) + 1 \\ 
\edepth(A \says \phi) & = \edepth(\phi) 
\end{array} 
$$
Note that terms that are not formulas have entailment depth 0, 
and formulas not containing $\entails$ have entailment depth 1. 
The formula $p \entails p$ has depth 2, and $(p\entails p) \entails p$ has depth 3, since $p \entails p$ is in $\formulas$.

For relations $R\subseteq \terms\times W$ and $R'\subseteq \terms\times W$, define $R \equiv_k R'$ when 
for all terms $t$ with $\edepth(t) \leq k$ and $w \in W$ we have $(t,w) \in R$ iff $(t,w) \in R'$. 
Intuitively, $R \equiv_0 R'$ when $R$ and $R'$ agree on the entailments of all terms that are not formulas,
$R \equiv_1 R'$ implies that, additionally, $R$ and $R'$ agree on the entailments of formulas that do not
contain $\entails$, and $R \equiv_2 R'$ implies that $R$ and $R'$ agree on the entailments of formulas that contain 
 $\entails$, but with a single depth of nesting, etc.

When $M = \langle W, \Rsigns, \Rentails, \Rsays , \pi \rangle$ is a model and $R \subseteq \terms\times W$ is a relation, we define
$M(R) = \langle W, \Rsigns, R, \Rsays , \pi \rangle$ to be the result of substituting $R$ for $\Rentails$.

\begin{proposition} \label{prop:Romega}
Let $k \geq 0$ and let $W$ be the set of worlds of a model $M$.  
Suppose that $R,R'\subseteq \terms\times W$ are relations such that $R \equiv_k R'$. 
Then for all $w\in W$ and formulas $\phi$ with $\edepth(\phi) \leq k+1$ 
we have $M(R),w \models  \phi$ iff $M(R'),w \models  \phi$. 
\end{proposition}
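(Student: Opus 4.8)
The plan is to prove the statement by induction on the structure of $\phi$, treating $k$ as a fixed parameter throughout. Concretely, I would show for every formula $\phi$: \emph{if} $\edepth(\phi) \leq k+1$, \emph{then} $M(R),w \models \phi$ iff $M(R'),w \models \phi$ for all $w \in W$. The key structural observation, which I would state at the outset, is that $M(R)$ and $M(R')$ share the same $W$, $\Rsigns$, $\Rsays$ and $\pi$, and differ only in the entailment relation; hence the only satisfaction clause that can distinguish them is the one for $t \entails \psi$, and every other clause will propagate the induction hypothesis verbatim.

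First I would dispatch the clauses that do not recurse through the entailment relation. For $\phi = p$ and $\phi = A \signs t$ the satisfaction condition refers only to $\pi$, respectively $\Rsigns$, so the biconditional is immediate (and $\edepth(\phi) = 1 \leq k+1$ holds automatically). For $\phi = \neg\psi$, $\phi = \psi_1 \land \psi_2$ and $\phi = A \says \psi$, the defining clauses use only the boolean connectives and the relation $\Rsays$, all of which are shared; moreover in each of these cases the $\edepth$ of $\phi$ equals the maximum of (or is equal to) the $\edepth$ of its immediate subformulas, so those subformulas still have entailment depth $\leq k+1$ and the induction hypothesis applies to them directly, yielding the result.

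The main case is $\phi = t \entails \psi$, and it is the only place where the hypothesis $R \equiv_k R'$ is used. Here $\edepth(t \entails \psi) = \max(\edepth(t),\edepth(\psi)) + 1 \leq k+1$, which forces both $\edepth(t) \leq k$ and $\edepth(\psi) \leq k$. From $\edepth(t) \leq k$ together with $R \equiv_k R'$, we get $(t,w') \in R$ iff $(t,w') \in R'$ for every $w' \in W$; that is, the set of worlds over which the clause for $t \entails \psi$ quantifies is \emph{literally the same} in $M(R)$ and $M(R')$. From $\edepth(\psi) \leq k \leq k+1$ the induction hypothesis applies to $\psi$, giving $M(R),w' \models \psi$ iff $M(R'),w' \models \psi$ at every $w'$. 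Combining these two observations, the universally quantified statement defining $M(R),w \models t \entails \psi$ is equivalent to the one defining $M(R'),w \models t \entails \psi$, which closes the induction.

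I do not expect a serious obstacle: the entire content is the bookkeeping with $\edepth$, and the one point that must be gotten right is that the clause $\edepth(t \entails \psi) = \max(\edepth(t),\edepth(\psi))+1$ is precisely what is needed to push \emph{both} the left-hand term $t$ and the right-hand formula $\psi$ down to entailment depth $\leq k$ — so that $R \equiv_k R'$ controls the quantification domain while the induction hypothesis handles the body. One minor subtlety to note in passing is that when the term $t$ on the left of $\signs$ is itself a formula it may carry an entailment depth much larger than $\edepth(A \signs t) = 1$; this causes no difficulty, since the clause for $A \signs t$ never recurses into $t$, and in the $\entails$ clause the bound $\edepth(t \entails \psi) \leq k+1$ already constrains $\edepth(t)$.
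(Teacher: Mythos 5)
Your proof is correct. The only substantive difference from the paper's argument is organizational: the paper proves the proposition by an outer induction on $k$ with a nested structural induction on $\phi$ (the base case $k=0$ handling $\entails$-free formulas separately, and the inductive step using the outer hypothesis for $R \equiv_k R'$ to handle the body $\psi$ of $t \entails \psi$ while $R \equiv_{k+1} R'$ handles the term $t$), whereas you fix $k$ once and run a single structural induction on $\phi$. Your version goes through because, as you observe, the clause $\edepth(t\entails\psi) = \max(\edepth(t),\edepth(\psi))+1 \leq k+1$ already forces both $\edepth(t) \leq k$ and $\edepth(\psi) \leq k$, so $R \equiv_k R'$ controls the quantification domain directly and the structural hypothesis (applicable since $\edepth(\psi) \leq k \leq k+1$, and universally quantified over worlds, as needed for the shift to the worlds $w'$ in the entailment set) handles the body; no semantic recursion into the term $t$ is ever required. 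The key case, the use of $R \equiv_k R'$, and the depth bookkeeping are identical in both proofs; your single induction is, if anything, the cleaner packaging, since the paper's outer induction on $k$ turns out not to be load-bearing for this particular statement.
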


\begin{proof} 
By induction on $k$. For the base case $k = 0$, assume $R \equiv_0 R'$.
Formulas $\phi$ with depth $\edepth( \phi) \leq k+1 = 1$ 
do not contain the operator $\entails$, and their semantics does not depend on
the relation $\Rentails$ in a model. A straightforward induction on the construction 
of $\phi$ shows that for all $w\in W$, we have $M(R),w \models \phi$ iff 
$M(R'),w \models \phi$.  The details are similar to the argument for the 
inductive case given below, so omitted. 

For the inductive case, suppose by way of inductive hypothesis that 
$R \equiv_k R'$ implies that for all $w\in W$ and formulas $\phi$ with $\edepth(\phi) \leq k+1$ 
we have $M(R),w \models  \phi$ iff $M(R'),w \models  \phi$. 
Suppose $R \equiv_{k+1} R'$. Then also $R \equiv_{k} R'$.  By induction,  for all $w\in W$ and formulas $\phi$ with $\edepth(\phi) \leq k+1$ 
we have $M(R),w \models  \phi$ iff $M(R'),w \models  \phi$. 
We prove by a further induction on construction of $\phi$ that if $\edepth(\phi) \leq k+2$ then 
$M(R),w \models \phi$ iff $M(R'),w \models \phi$.  We have the following cases: 
\begin{itemize} 
\item $\phi$ is an atomic proposition $p\in \Prop$. Here $M(R),w \models \phi$ iff $p \in \pi(w)$ iff $M(R'),w \models \phi$.
\item $\phi = \neg \phi_1$. Here $\edepth(\phi_1) = \edepth(\phi) \leq k+2$, so by induction, 
$M(R),w \models \phi$ iff not $M(R),w \models \phi_1$ iff not $M(R'),w \models \phi_1$ iff $M(R'),w \models \phi$.
\item $\phi = \phi_1 \land \phi_2$. Here $\edepth(\phi_1), \edepth(\phi) \leq k+2$, so by induction, 
$M(R),w \models \phi$ iff $M(R),w \models \phi_1$ and $M(R),w \models \phi_2$
iff $M(R'),w \models \phi_1$ and $M(R'),w \models \phi_2$
iff  $M(R'),w \models \phi$.
\item $\phi = A \signs t$. Here $M(R),w \models \phi$ iff $(A,t) \in \Rsigns$ iff $M(R'),w \models \phi$. 
\item $\phi = t \entails \phi_1$. Here $t$ may be a formula, and we have 
$\edepth(t), \edepth(\phi_1) \leq k+1$. 
Since $R \equiv_{k+1} R'$ and $\edepth(t) \leq k+1$ we have $(t,u) \in R$ if $(t,u) \in R'$, for all $u \in W$. 
Since $\edepth(\phi_1) \leq k+1$ and  $R \equiv_{k} R'$, 
we have for all $u \in W$ that $M(R), u \models \phi_1$ iff $M(R'), u \models \phi_1$. 
Thus  $M(R),w \models \phi$ iff for all $u \in W$ with $(t,u) \in R$ we have $M(R), u \models \phi_1$
iff for all $u \in W$ with $(t,u) \in R'$ we have $M(R'), u \models \phi_1$ iff  $M(R'),w \models \phi$.
\item $\phi = A \says \phi_1$. Here $\edepth(\phi_1) = \edepth(\phi) \leq k+2$, 
and by induction, we have $M(R), w' \models \phi_1$ iff $M(R'), w' \models \phi_1$. 
Hence $M(R), w \models \phi$ iff 
for all $w'\in W$ we have $(w,A,w') \in \Rsays$ implies  $M(R),w'\models \phi_1$ 
iff for all $w'\in W$ we have $(w,A,w') \in \Rsays$ implies  $M(R'),w'\models \phi_1$ 
iff $M(R'),w\models \phi$. 
\end{itemize} 
\end{proof} 

We now show that, given a model $M = \langle W, \Rsigns, \Rentails, \Rsays , \pi \rangle$
and a relation $R^0 \subseteq (\terms \setminus \formulas) \times W$, 
we can construct a relation $R^\omega$ such that $R^0 \equiv_0 R^\omega$ and 
$M(R^\omega)$ satisfies constraint SC1. We obtain 
$R^\omega$ as the limit $\bigcup_{i<\omega} R^i$ of a sequence of relations 
$R^i \subseteq \terms \times W$, defined inductively by 
$$R^{i+1} = R^i \cup \{(\phi,w)~|~ \phi \in \formulas,~ \edepth(\phi) = i+1, ~ M(R^i),w\models \phi~\}~.$$ 
Intuitively, the following result states that in $M(R^\omega)$, the 
entailments of terms in $\terms \setminus \formulas$ are exactly as in $R^0$,  
and each formula entails just itself (plus anything that is valid in $M$). 

\begin{proposition} 
Let $M = \langle W, \Rsigns, \Rentails, \Rsays , \pi \rangle$ be a model and 
and let $R^0 \subseteq (\terms \setminus \formulas) \times W$ be a relation.  
Then $R^0 \equiv_0 R^\omega$ and for all $\phi \in \formulas$, and $w \in W$, we have 
$ (\phi,w) \in R^\omega$ iff $M(R^\omega), w \models \phi$. 
\end{proposition}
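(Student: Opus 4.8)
The plan is to exploit the entailment-depth stratification built into the construction: a formula $\phi$ of entailment depth $j$ is a candidate for addition to the entailment relation exactly once, at stage $j$, and whether it is added is settled by evaluating $\phi$ in $M(R^{j-1})$. First I would record two easy structural facts. The sequence $(R^i)_{i<\omega}$ is increasing, since $R^{i+1}\supseteq R^i$ by definition. And no pair whose first component lies in $\terms\setminus\formulas$ is ever added after stage $0$, since each $R^{i+1}\setminus R^i$ consists only of pairs $(\phi,w)$ with $\phi\in\formulas$. Together these give $R^\omega\cap((\terms\setminus\formulas)\times W)=R^0$, which is exactly $R^0\equiv_0 R^\omega$, the first claim.

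The heart of the argument is an intermediate claim: $R^i\equiv_i R^\omega$ for every $i\geq 0$. To establish it I would note that a term of entailment depth at most $i$ is either a non-formula --- handled by the invariant above --- or a formula $\phi$ of depth $j$ with $1\leq j\leq i$. Because stage $j'$ adds only formulas of depth exactly $j'$, and $R^0$ contains no formulas at all, such a $\phi$ can enter the relation only at stage $j$, and then only if $M(R^{j-1}),w\models\phi$ (here $(\phi,w)\notin R^{j-1}$, since $R^{j-1}$ contains no formula of depth $\geq j$). Hence $(\phi,w)\in R^i \Leftrightarrow (\phi,w)\in R^j \Leftrightarrow (\phi,w)\in R^\omega \Leftrightarrow M(R^{j-1}),w\models\phi$, so $R^i$ and $R^\omega$ agree on all terms of depth at most $i$.

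The second claim then follows quickly. Fix $\phi\in\formulas$ and put $j=\edepth(\phi)$; every formula has positive entailment depth (both syntactic base cases, $p$ and $A\signs t$, have depth $1$, and the recursive clauses preserve being $\geq 1$), so $j\geq 1$ and $R^{j-1}$ is defined. By the chain of equivalences above, $(\phi,w)\in R^\omega$ iff $M(R^{j-1}),w\models\phi$. Now apply Proposition~\ref{prop:Romega} with $k=j-1$: from $R^{j-1}\equiv_{j-1}R^\omega$ and $\edepth(\phi)=j=(j-1)+1$ we obtain $M(R^{j-1}),w\models\phi$ iff $M(R^\omega),w\models\phi$. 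Composing the two equivalences yields $(\phi,w)\in R^\omega$ iff $M(R^\omega),w\models\phi$.

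I expect the main obstacle to be the bookkeeping in the intermediate claim: making precise that a depth-$j$ formula's membership in $R^\omega$ is decided once and for all at stage $j$ by satisfaction in $M(R^{j-1})$, that $R^{j-1}$ does not yet contain it, and --- crucially --- that the depth indices line up so that Proposition~\ref{prop:Romega} can be invoked with exactly $k=j-1$. None of the individual steps (monotonicity, the non-formula invariant, the routine structural case analyses) is hard, but phrasing the stratification cleanly and matching the index offset to the earlier proposition is where the care lies.
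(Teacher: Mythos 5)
Your proposal is correct and follows essentially the same route as the paper's proof: establish $R^k \equiv_k R^\omega$ for all $k$ from the stratified construction, invoke Proposition~\ref{prop:Romega} to transfer satisfaction between $M(R^{j-1})$ and $M(R^\omega)$ for formulas of depth $j$, and close the chain of equivalences via the defining clause $(\phi,w)\in R^{j}$ iff $M(R^{j-1}),w\models\phi$. You merely spell out in more detail (monotonicity, the non-formula invariant, positivity of formula depth) what the paper dispatches with ``by construction.''
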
  

\begin{proof} 
It is immediate from the construction and the fact that $\terms \setminus \formulas = \{t \in \terms ~|~\edepth(t) = 0\}$ that 
$R^0 \equiv_0 R^\omega$. Similarly, by construction, for all $k> 0$ we have $R^k \equiv_k R^\omega$.
Thus $R^k \equiv_k R^\omega$ for all $k \geq 0$. 
Thus, by Proposition~\ref{prop:Romega} we have, for all 
$k\geq 0$, that for all $w\in W$ and formulas $\phi$ with $\edepth(\phi) \leq k+1$ 
we have $M(R^\omega),w \models  \phi$ iff $M(R^k),w \models  \phi$. 
Since  for $\phi \in \formulas$ with $\edepth(\phi) = k+1$, 
we have $(\phi,w) \in R^{k+1}$ iff $M(R^k),w \models  \phi$, 
we have $(\phi,w) \in R^\omega$ iff 
$(\phi,w) \in R^{k+1}$ iff $M(R^k),w \models  \phi$ iff $M(R^\omega),w \models  \phi$.   
\end{proof} 

Note that we obtain, in particular, that $R^0 \equiv_0 R^\omega$ and $M(R^\omega)$ satisfies SC1. 
Proposition~\ref{prop:extend} is therefore a corollary of this result.

\begin{oldtheorem}{prop:finheight}
\begin{proposition} 
For every sentence $\phi$, model $M$ and world $w$, the semantic tree for $M,w\models \phi$ has finite height. 
\end{proposition}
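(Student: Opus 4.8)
The plan is to exhibit a natural-number measure on formulas that strictly decreases along every edge of the semantic tree; the finite height then follows at once, bounded by the measure of the root formula. Define $\mu : \formulas \to \Nat$ by $\mu(p) = 1$, $\mu(A \signs t) = 1$, $\mu(\neg\phi) = \mu(\phi) + 1$, $\mu(\phi_1 \land \phi_2) = \max(\mu(\phi_1),\mu(\phi_2)) + 1$, $\mu(t \entails \phi) = \mu(\phi) + 1$, $\mu(A \says \phi) = \mu(\phi) + 1$, and $\mu(\this x. \phi) = \mu(\phi) + 1$. The crucial feature of $\mu$ is that it never inspects the internal structure of a term: in the clauses for $A \signs t$ and $t \entails \phi$ the subterm $t$ contributes nothing beyond a constant.

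First I would prove the substitution-invariance lemma: for every formula $\phi$, variable $x$ and term $t$ we have $\mu(\phi[x \mapsto t]) = \mu(\phi)$. This is a routine structural induction on $\phi$; the only cases worth spelling out are $A \signs t'$ and $t' \entails \phi_1$, where the substitution touches only term positions that $\mu$ ignores, so $\mu$ is literally unchanged, and $\this y. \phi_1$, which either leaves $\phi$ untouched (when $y = x$) or pushes the substitution into $\phi_1$, where the induction hypothesis applies. This lemma is exactly where the syntactic restriction — variables may occur only inside terms, never in the logical skeleton of a formula — does its work, and it is the formal content of the informal ``the recursion is not vicious'' remark.

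Next I would check that $\mu$ strictly decreases along each edge of the semantic tree. Running through the satisfaction clauses: from $M, w' \models \neg\phi_1$ the unique child is $M, w' \models \phi_1$ with $\mu(\phi_1) < \mu(\neg\phi_1)$; from $M, w' \models \phi_1 \land \phi_2$ the two children have measures $\mu(\phi_1), \mu(\phi_2) \le \max(\mu(\phi_1),\mu(\phi_2)) < \mu(\phi_1\land\phi_2)$; from $M,w' \models t \entails \phi_1$ and from $M, w' \models A\says\phi_1$ every child has the form $M, w'' \models \phi_1$ with $\mu(\phi_1) < \mu(t\entails\phi_1)$, respectively $\mu(\phi_1) < \mu(A\says\phi_1)$; and $M,w'\models p$ and $M,w'\models A\signs t$ are leaves. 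The delicate case is $M, w' \models \this x. \phi_1$, whose unique child is $M, w' \models \phi_1[x \mapsto \this x. \phi_1]$: by the lemma $\mu(\phi_1[x\mapsto \this x.\phi_1]) = \mu(\phi_1) = \mu(\this x.\phi_1) - 1$, so the measure drops here too, even though the child formula may be syntactically larger than the parent.

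Finally, since $\mu$ takes values in $\Nat$ and strictly decreases along every edge, every root-to-node path in the semantic tree for $M,w \models \phi$ has length at most $\mu(\phi)$; in particular the tree has finite height (at most $\mu(\phi)$), regardless of whether $W$ — and hence the branching of the tree — is infinite. I expect the main obstacle to be getting the substitution-invariance lemma stated and proved cleanly, since it is the one place the argument genuinely depends on the positional restriction on variables; once it is in hand, the remainder is a mechanical inspection of the six original semantic clauses together with the clause for $\this x.\phi$.
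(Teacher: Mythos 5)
Your proof is correct, and it takes a somewhat cleaner route than the paper's, though both hinge on the same essential observation. The paper defines a measure $\depth(\phi)$ that counts only the nesting of semantic occurrences of $\this$, proves the same substitution-invariance fact ($\depth(\phi[x\mapsto t]) = \depth(\phi)$, for exactly the reason you identify), and then performs a well-founded induction on the \emph{lexicographic} pair $(\depth(\phi), |\phi|)$: the $\this$-depth strictly drops at an unfolding step but is merely non-increasing at the other connectives, so the formula size is needed as a tie-breaker, and conversely the size can grow at an unfolding step, so neither component works alone. Along the way the paper must also carry the auxiliary claim that the height depends only on $\phi$ and not on $M$ or $w$. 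Your single scalar measure $\mu$, which increments at every connective of the logical skeleton while ignoring term positions entirely, absorbs both components into one natural number that strictly decreases along \emph{every} edge of the semantic tree; this avoids the lexicographic bookkeeping, makes the uniformity over models and worlds automatic (every root-to-node path has length below $\mu(\phi)$ regardless of branching), and yields an explicit height bound. The one point worth being careful about, which you do handle correctly, is that in $t \entails \phi_1$ the term $t$ may itself be a formula; your $\mu$ deliberately does not recurse into it, which is legitimate precisely because the satisfaction clause for $\entails$ uses $t$ only as a key into $\Rentails$ and never evaluates it semantically.
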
 
\end{oldtheorem} 

\begin{proof}
We write $\height(M,w\models \phi)$ for the height (possibly infinite) of the semantic tree for $M,w\models \phi$. 
We note that since the set of worlds is potentially infinite, nodes may have infinitely many children, so it does not necessarily 
hold that the semantic tree is finite.  We show the stronger proposition that for all models $M,M'$ with, respectively, worlds $w,w'$, 
and sentences $\phi$, we have $\height(M,w\models \phi)= \height(M',w'\models \phi)$ is finite. That is, the height of the semantic tree 
depends only on $\phi$, and not on $M$ or $w$.  

Define the \emph{semantic $\this$-depth} of a formula (not necessarily a sentence) $\phi$, denoted $\depth(\phi)$, 
to be the depth of nesting of \emph{semantic} occurrences of the operator $\this$ in $\phi$. 
This excludes (syntactic) occurrences of $\this$ in terms $t$ in subformulas of $\phi$ of the forms $A \signs t$ for some agent $A$, or $t\entails \psi$. 
More precisely, we define $\depth(\phi)$ inductively, by 
$$\begin{array}{rl}  
\depth(p) & = 0 \\
\depth(\neg \phi) & =\depth(\phi) \\ 
\depth(\phi_1 \land \phi_2) & = max(\depth(\phi_1), \depth(\phi_2)) \\ 
\depth(A \signs t) & = 0 \\  
\depth(t\entails \phi) & = \depth(\phi) \\ 
\depth(A \says \phi) & = \depth(\phi) \\ 
\depth(\this x (\phi)) & = \depth(\phi)+1 
\end{array} 
$$
We claim that for all formulas $\phi$ and terms $t$, we have $\depth(\phi[x\mapsto t]) = \depth(\phi)$. 
Intuitively, this is because in formulas $\phi$, the variable $x$ may occur only in syntactic positions, where it does 
not contribute to the depth. The proof is by induction on the construction of $\phi$. The cases for atomic propositions $p$ 
and formulas $A \signs u$ are trivial, and the cases for $\phi$ of the form $\neg \psi$, $\psi_1 \land \psi_2$ $t \entails \psi$ and $A \says \psi$ are 
straightforward, e.g.. 
\begin{align*} 
\depth((A \says \psi)[x \mapsto t]) & = \depth(A \says ( \psi[x \mapsto t]) \\ 
& = \depth( \psi[x \mapsto t]) \\ 
& = \depth( \psi) & \text{(by induction)} \\
&  = \depth(A\says\psi)~.
\end{align*} 
For $\phi = \this y (\psi)$, we have two cases. 
If $x = y$, then  
\begin{align*}
\depth( \phi[x\mapsto t]) & = \depth((\this y(\psi))[x \mapsto t]) \\
&  = \depth(\this y(\psi)) \\
& = \depth(\phi)~.
\end{align*}
If $x \neq y$, then 
\begin{align*} 
\depth( \phi[x\mapsto t])  & = \depth((\this y(\psi))[x \mapsto t]) \\ 
& = \depth((\this y(\psi[x \mapsto t])) \\ 
& = \depth( \psi[x \mapsto t]) + 1  \\ 
& = \depth( \psi) + 1 & \text{(by induction)}\\
& = \depth( \this y(\psi))\\
& = \depth(\phi)
\end{align*} 
as required. 

\newcommand{\ord}{k}
We can now prove the (generalized version of) the result. For a formula $\phi$ let $|\phi|$ be the size of $\phi$, i.e, the number of symbols in $\phi$. 
We proceed by induction using the well-founded order on formulas induced by the mapping $\ord : \phi \mapsto (\depth(\phi),|\phi|)$ 
from the lexicographic order on pairs of natural numbers, which is well-founded. 

Clearly, for all $M,w$,  we have  $\height(M,w\models p)=0$ and $\height(M,w \models A \signs t)=0$  since these nodes have no children. 

For $\neg \phi$, note that $k(\neg \phi) = (\depth(\neg\phi), |\neg \phi| ) = (\depth(\phi), |\neg \phi|) >  ((\depth(\phi), |\phi|) = k(\phi)$. 
Thus, we have $\height(M,w\models \neg \phi)= \height(M,w\models \phi) +1$ which is finite and independent of $M,w$ by induction.  
 
Similarly, for $ \phi_1 \land \phi_2$ we have 
\begin{align*} 
k(\phi_1 \land \phi_2) &  = (\depth(\phi_1 \land \phi_2), |\phi_1\land \phi_2|)\\
& =( \max(\depth(\phi_1), \depth(\phi_2)), |\phi_1 \land \phi_2|) \\
& > (\depth(\phi_i), |\phi_i|)\\
& = k(\phi_i)
\end{align*} 
for each $i= 1,2$. 
By induction, for each $i=1,2$, we have $\height(M',w' \models \phi_i)$ is finite and independent of $M',w'$.  
Hence $\height(M,w\models \phi_1 \land \phi_2)= max(\height(M,w\models \phi_1),\height(M,w\models \phi_2)) +1$ is also finite and independent of $M,w$. 

For nodes labelled $M,w \models t\entails \phi$ we have a child $M,w' \models \phi$ for each world $w'$ of $M$ such that $(t,w') \in \Rentails$. 
Note 
\begin{align*} 
k(t \entails \phi) & = (\depth(t\entails \phi), |t \entails \phi|)\\
&  = (\depth(\phi), |t \entails \phi|) \\
& > (\depth(\phi), |\phi|) \\
& = k(\phi) ~. 
\end{align*}
By induction, we have that $\height(M,w' \models \phi)$ is independent of $M,w'$ and finite. 
Hence we have  that $\height(M,w \models t \entails \phi) = \height(M,w \models \phi) + 1$ is also finite and independent of $M,w$. 
The argument for $M,w \models A \says \phi$ is similar. 

For nodes labelled  $M,w \models \this x ( \phi)$, we have one child, labelled 
$M,w \models \phi[x \mapsto \this x(\phi)]$, 
so $\height(M,w \models \this x ( \phi)) = \height(M,w \models \phi[x \mapsto \this x(\phi)]) + 1$. 
Using the fact, proved above, that  $\depth(\phi[x\mapsto t]) = \depth(\phi)$ for all terms $t$, 
we have in particular that  $\depth(\phi[x\mapsto \this x (\phi)]) = \depth(\phi)$. 
Hence 
\begin{align*} 
k(\this x ( \phi) ) & =  (\depth(\this x ( \phi)), |\this x (\phi)|) \\
&  = (\depth(\phi) + 1, |\this x (\phi)|) \\
& = (\depth(\phi[x \mapsto \this x (\phi)]) + 1, |\this x( \phi)|)\\
& > (\depth(\phi[x \mapsto \this x (\phi)]),|\phi[ x \mapsto \this x (\phi)]|) \\
& = \depth(\phi[ x\mapsto \this x (\phi)])~. 
\end{align*} 
Hence, by induction, we have that $\height(M,w \models \phi[x \mapsto \this x(\phi)])$
is finite and independent of $M,w$. 
It follows that $\height(M,w \models \this x ( \phi))$ is also finite and independent of $M,w$, 
as required. 
\end{proof} 

\begin{oldtheorem}{prop:saysomega}
\begin{proposition} 
The operator $G \says^\omega \phi$ satisfies the following for all models $M$: 
\begin{enumerate} 
\item if $M \models \phi$ then $M \models G\says^\omega \phi$,
\item $M \models ((G\says^\omega \phi)\land G\says^\omega (\phi\rimp \psi)) \rimp G \says^\omega \psi$,
\item  if  $M \models \phi \rimp G \says (\phi \land \psi)$ then 
$M \models \phi \rimp G \says^\omega \psi$,
\item $M \models G \says^\omega \phi \dimp G \says (\phi \land G\says^\omega \phi)$.
\end{enumerate}
\end{proposition}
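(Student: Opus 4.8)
The plan is to isolate one auxiliary fact and then read off all four clauses almost mechanically. The fact is that for each fixed $k\geq 1$ the operator $G\says^k$ behaves like a normal modal operator: it distributes over conjunction, it is monotone, it obeys necessitation (i.e.\ preserves model-validity), and it validates the $K$-schema $(G\says^k\alpha\land G\says^k(\alpha\rimp\beta))\rimp G\says^k\beta$ at every world of every model. All of this is inherited from the facts that each $A\says$ is a normal Kripke modality --- its semantic clause merely quantifies over $\Rsays$-successors, so \ref{ax:saysnormal}, \ref{r:Nsays} and distribution over $\land$ hold at every world --- and that $G\says$ is the finite conjunction $\bigwedge_{A\in G}A\says$. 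For $k=1$ these properties are immediate. For $G\says^k$ with $k>1$, I would prove the $K$-schema by induction on $k$: apply necessitation of $G\says$ to the level-$k$ instance of $K$, then use $K$ for $G\says$ together with the distribution of $G\says$ over $\land$ to push the implication one level outward, matching $G\says^{k+1}\alpha=G\says(G\says^k\alpha)$ against $G\says\bigl(G\says^k\alpha\land G\says^k(\alpha\rimp\beta)\bigr)$. Necessitation and monotonicity of $G\says^k$ then follow from $K$ for $G\says^k$ together with iterated necessitation of $G\says$.

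With the lemma in hand, clause~(1) is a trivial induction on $k$ using necessitation at each step, giving $M\models G\says^k\phi$ for all $k$, hence $M\models G\says^\omega\phi$. Clause~(2): fix a world $w$ satisfying the antecedent; then $M,w\models G\says^k\phi$ and $M,w\models G\says^k(\phi\rimp\psi)$ for every $k$, so by $K$ for $G\says^k$ we get $M,w\models G\says^k\psi$ for every $k$, i.e.\ $M,w\models G\says^\omega\psi$. Clause~(3): assuming $M\models\phi\rimp G\says(\phi\land\psi)$, I would show by induction on $k$ that $M\models\phi\rimp G\says^k(\phi\land\psi)$ --- the base case is the hypothesis; for the step, apply necessitation of $G\says$ to the level-$k$ statement, combine with the consequence $M\models\phi\rimp G\says\phi$ of the hypothesis (via distribution of $G\says$ over $\land$) and $K$ for $G\says$ --- and then use monotonicity of $G\says^k$ with the tautology $\phi\land\psi\rimp\psi$ to conclude $M\models\phi\rimp G\says^k\psi$ for all $k$, hence $M\models\phi\rimp G\says^\omega\psi$.

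Clause~(4) I would prove directly from the semantic clauses rather than via the lemma. For $\rimp$: assume $M,w\models G\says^\omega\phi$, fix $A\in G$ and an $\Rsays$-successor $w'$ of $(w,A)$; from $M,w\models G\says^1\phi$ we get $M,w'\models\phi$, and from $M,w\models G\says^{k+1}\phi=G\says(G\says^k\phi)$ we get $M,w'\models G\says^k\phi$ for each $k\geq1$, so $M,w'\models\phi\land G\says^\omega\phi$; since $A,w'$ were arbitrary, $M,w\models G\says(\phi\land G\says^\omega\phi)$. For $\Leftarrow$: from $M,w\models G\says(\phi\land G\says^\omega\phi)$, distribution of $G\says$ over $\land$ gives $M,w\models G\says\phi\;(=G\says^1\phi)$ and $M,w\models G\says(G\says^\omega\phi)$; for $k=j+1$ with $j\geq1$, since $M,u\models G\says^\omega\phi$ implies $M,u\models G\says^j\phi$ at every world $u$, monotone reasoning through $\Rsays$ yields $M,w\models G\says(G\says^j\phi)=G\says^{k}\phi$. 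Hence $M,w\models G\says^k\phi$ for all $k$, i.e.\ $M,w\models G\says^\omega\phi$.

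The main obstacle is the inductive proof of the $K$-schema for $G\says^k$ in the lemma: getting the nesting of modalities and the distribution over conjunction to line up correctly is the only delicate bookkeeping, and everything else is short. A minor point to watch throughout is the distinction between validity in $M$ (the hypotheses/conclusions of clauses~(1) and~(3)) and satisfaction at a particular world; necessitation and $K$ for $A\says$ both respect model-validity because the $A\says$-clause quantifies only over worlds already in $M$, so this causes no difficulty, and the whole argument uses no semantic constraint beyond the plain Kripke semantics for $\says$.
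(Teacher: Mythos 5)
Your proposal is correct and follows essentially the same route as the paper: clauses (1) and (2) from normality of each $A\says$ lifted to $G\says^k$ by induction, clause (3) via the induction $M\models\phi\rimp G\says^k(\phi\land\psi)$ for all $k$, and clause (4) directly from the semantic clauses. The only cosmetic difference is that you discharge the last step of (3) with monotonicity of $G\says^k$ level by level, where the paper invokes clause (2) once at the end; these amount to the same reasoning.
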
 
\end{oldtheorem}

\begin{proof} 
Properties (1) and (2) follow straightforwardly by induction from normality of the operator $A\says$ for all agents $A$. 

For (3), suppose $M \models \phi \rimp G \says (\phi \land \psi)$ . We show by induction on $k$ that 
$M \models \phi \rimp G\says^k (\phi \land \psi)$ for all $k\geq 1$. This yields (3) by using (2). 
The base case of $k=1$ is simply a restatement of the assumption. 
Assume $M \models \phi \rimp G\says^k (\phi \land \psi)$. 
By normality of $A \says$, we obtain 
$M \models (A\says \phi) \rimp A \says (G\says^k (\phi \land \psi))$ for all $A\in G$, 
and hence  
$M \models (G \says \phi) \rimp G \says (G\says^k (\phi \land \psi))$. 
Similarly, by normality, we obtain from the original assumption that $M \models \phi \rimp G \says \phi$. 
Thus $M \models \phi \rimp G\says^{k+1} (\phi \land \psi)$. 

For (4), note that $M,w \models G \says^\omega \phi$ implies  for all $k \geq 1$ that $M,w \models G \says^{k+1} \phi$, 
hence  $M,w \models G \says (G \says^{k} \phi)$, as well as $M,w \models G \says \phi$. 
Thus, for all $A\in G$ and $(w,A,w') \in \Rsays$ and $k \geq 1$, we have $M,w' \models \phi \land G\says^k \phi$. 
We obtain from this that $M,w \models G \says (\phi \land G \says^\omega \phi)$. Conversely, if  
$M,w \models G \says (\phi \land G \says^\omega \phi)$, we have $M,w \models G \says \phi$ and 
$M, w \models G \says (G \says^k \phi)$, for all $k \geq 1$, i.e., $M, w \models G \says^k \phi$, for all $k \geq 2$. 
Thus, $M,w \models G \says^\omega \phi$. 
\end{proof}

\end{document}